\newcommand\tstrut{\rule{0pt}{2.4ex}}
\newcommand\bstrut{\rule[-1.0ex]{0pt}{0pt}}
\def\R{\mathbb{R}}
\def\B{\mathbb{B}}
\def\T{\mathbb{T}}
\def\d{\partial}
\def\n{\textrm{\bf{n}}}
\newcommand{\Om}{\overline{\Omega}}
\newcommand{\om}{\Omega}
\newcommand{\E}{\mathbb{E}}
\title{Behavior of Graph Laplacians on Manifolds with Boundary}
\author{Xueyuan Zhou\\
Department of Computer Science
\\ University of Chicago
\\\texttt{\small zhouxy@cs.uchicago.edu}
\And Mikhail Belkin\\
Department of Computer Science and Engineering\\
Ohio State University\\
\texttt{\small mbelkin@cse.ohio-state.edu}}
\begin{document}

\maketitle

\begin{abstract}
In manifold learning, algorithms based on graph Laplacians
constructed from data have received considerable attention both in
practical applications and theoretical analysis. In particular, the
convergence of graph Laplacians obtained from sampled data to
certain continuous operators has become an active research topic
recently. Most of the existing work has been done under the
assumption that the data is sampled from a manifold without boundary
or that the functions of interests are evaluated at a point away
from the boundary. However, the question of boundary behavior is of
considerable practical and theoretical interest. In this paper we
provide an analysis of the behavior of graph Laplacians at a point
near or on the boundary, discuss their convergence rates and their
implications and provide some numerical results. It turns out that
while points near the boundary occupy only a small part of the total
volume of a manifold, the behavior of  graph Laplacian there has
different scaling properties from its behavior elsewhere on the
manifold, with global effects on the whole manifold, an observation
with potentially important implications for the general problem of
learning on manifolds.
\end{abstract}
\section{Introduction}
Graph Laplacian constructed from data points is a key element in
many machine learning algorithms including spectral clustering,
e.g.,~\cite{uvon}, semi-supervised
learning~\cite{zhu2006semi,chapelle2006ssl} and dimensionality
reduction~\cite{BelkinLapMap2003}, as well as a number of other
applications. A large amount of work in recent years has been
centered on analyzing various theoretical aspects of graph
Laplacians on manifolds, and, in particular, on their different
modes of convergence, when the data goes to infinity and/or the
parameters, such as kernel bandwidth, tend to
zero~\cite{belkinThesis,lafon,hein,CoifmanLafon2006,singer,gine,Hein07graphlaplacians,belkin2008,uvon2008,rosasco2010}.
A typical result in that direction shows that the discrete graph
Laplacian converges\footnote{Different modes of convergence are
possible here, such as different types of pointwise or uniform
convergence or convergence of eigenvectors.} to the
Laplacian-Beltrami operator on manifolds when the bandwidth
parameter of the kernel is chosen as an appropriate function of the
number of data points. These results help to clarify our
understanding of the underlying objects, to shed light on properties
of the algorithms and to guide the selection of algorithms in
practical applications.

For example, an analysis of  normalized versus unnormalized
Laplacians in~\cite{uvon2008}) suggests that normalization may be
preferable in practical applications. In another example, the
estimators of several graph Laplacian based semi-supervised learning
algorithms had recently been shown to converge to constant solutions
in the limit of infinite unlabeled points while fixing labeled
points \cite{nadler2009}, suggesting the use of iterated Laplacians
\cite{Zhou2011a}, which indeed shows superior performance in
practice.

The spectral convergence of a graph Laplacian is another important
limit analysis of the graph Laplacian, which links directly to
applications. The empirical spectral convergence of spectral
clustering when the sample size $n$ goes to infinity for a fixed
kernel bandwidth $t$ was studied by \cite{uvon2008}, while the
spectral convergence of a graph Laplacian to the Laplace-Beltrami
operator when the kernel bandwidth $t$ goes to zero as $n$ goes to
infinity is studied in \cite{belkinCLEM}.

However, most previous results on graph Laplacians deal with the
setting where the manifold does not have a boundary or when the
operator is analyzed at a point away from the boundary. Arguably, it
is a significant short-coming of these analyses, since manifolds or
domains with boundary are present explicitly or implicitly in many
problems of significant interest in data analysis. Perhaps the
simplest example is the fact that the pixel intensity of a
gray-scale image cannot be smaller than zero, providing a natural
boundary condition for any image manifolds. A more interesting
example is in motion analysis, where the manifold of configurations
of a human or robot body (perhaps embedded using video images or
data from sensors attached to limbs) has boundaries corresponding to
the limits for the range of motions of each individual joint. More
generally, it is natural to think that boundaries in data are
present whenever the generating process itself is in some way
constrained. It is clear that if such manifolds are to be learned
from data, the boundary behavior cannot be disregarded.

In the current paper we discuss the boundary behavior of graph
Laplacians by analyzing the graph Laplacian convergence at the
boundary. We show that the graph Laplacian at the boundary converges
to a gradient operator in the direction normal to the boundary, when
the bandwidth parameter $t$ is chosen adaptively as a function of
the number of data points. We provide explicit bounds for the
convergence. One of the key results of our analysis is that both the
behavior and  the scaling of the graph Laplacian near the boundary
is quite different from that in the interior of the manifold.
Specifically, for a fixed function $f(x)$ and a small bandwidth
parameter $t$ the (appropriately scaled) graph Laplacian will be
close to the Laplace-Beltrami operator $\Delta f(x)$ on interior
point $x$, while at the boundary the same object will be close to
the normal derivative $\frac{1}{\sqrt t}\partial_{\bf n} f(x)$. We
see that the large values of the graph Laplacian applied to a fixed
function are likely to correspond to the boundary points. Moreover,
the analysis shows that while there are few points near the boundary
of a manifold, their influence on the graph Laplacian is
disproportionately large and cannot be ignored. This suggests that
the boundary has a global effect on the graph Laplacian, a finding
that is confirmed by our numerical experiments provided in the
paper. Viewed in a different way it suggests that for algorithms
when a graph Laplacian is used as a regularizer, as is the case in
many applications, bounding the norm would lead to the suppression
of the large values near the boundary. Thus the minimizer of the
regularization problem (or similarly, the eigenvectors) should
satisfy the Neumann boundary conditions, i.e., be nearly constant in
the direction orthogonal to the boundary, which is confirmed by our
numerical experiments.

In a related line of investigation we find that  the symmetric
normalized graph Laplacian $L^s$ has a different boundary behavior
from the random walk (asymmetric normalized) and unnormalized graph
Laplacians. Unlike those two, for a fixed function $f(x)$, $L^s
f(x)$ converges to $\frac{1}{\sqrt t}[p(x)]^{1/2}\partial_\n
(f(x)/[p(x)]^{1/2})$ for a boundary point $x$, where $p(x)$ is the
probability density function. This does not lead to the Neumann
boundary condition, and seems strange from a practical point of
view.

As a further illustration of the importance of boundary conditions
in learning theory, we explore the boundary effects for a
reproducing kernel in a simple 1-dimensional example. We also
discuss the limit of the graph Laplacian regularizer on manifolds
with boundary, which cannot be taken for granted to be the same as
the limit on $\R^N$ or manifolds without boundary because of the
boundary behavior of graph Laplacians.

Finally we briefly compare the graph Laplacian built from random
samples to the Laplacian on regular grids in numerical PDE's.

\subsection{Problem Setting}
We now proceed with a more technical setting of the problem. Let
$\Om$ be a compact Riemannian submanifold of intrinsic dimension $d$
embedded in $\R^N$, $\om$ the interior of $\Om$, and $\d \om$ the
boundary of $\om$, which we will assume to satisfy the {\em
necessary smoothness conditions}\footnote{ Instead of spending
several pages to describe these smoothness conditions in this paper,
we refer readers to \cite{belkinThesis,lafon,hein} for more
details.}. Given $n$ random samples $X=\{X_1,\cdots, X_n\}$ drawn
i.i.d. from a distribution with a smooth density function $p(x)$ on
$\Om$ such that $0<a\le p(x)\le b<\infty$, we can build a weighted
graph $G(V,E)$ by mapping each sample point $X_i$ to vertex $v_i$
and assigning a weight $w_{ij}$ to edge $e_{ij}$. One typical weight
function is the Gaussian defined as
$w_{ij}=K_t(X_i,X_j)=1/t^{d/2}e^{-\|X_i-X_j\|^2_{\R^N}/t}$, which is
used in this paper. Let the $n\times n$ matrix $W$ be the edge
weight matrix of graph $G$ with $W(i,j)=w_{ij}$, and $D$ be a
diagonal matrix such that $D_{ii}=\sum_{j}w_{ij}$, then the
unnormalized graph Laplacian is defined as matrix $L^u$
\begin{equation}
    L^u=D-W
\end{equation}
There are several ways of normalizing $L^u$. For instance, the most
commonly used two are the asymmetric random walk normalized version
$L^r=D^{-1}L^u=I-D^{-1}W$ and the symmetric normalized version
$L^s=D^{-1/2}L^uD^{-1/2}=I-D^{-1/2}WD^{-1/2}$.

Another useful way of building a graph Laplacian is governed by a
parameter $\alpha$ such that we first normalize $W$ as
$W_\alpha=D^{-\alpha}WD^{-\alpha}$, then define the unnormalized,
random walk and symmetric normalized graph Laplacians as
\begin{equation}
\begin{array}{rl}
    L_\alpha^u= & D_\alpha-W_\alpha\\
    L_\alpha^r= & I-D_\alpha^{-1}W_\alpha\\
    L_\alpha^s= & I-D_\alpha^{-1/2}W_\alpha D_\alpha^{-1/2}
\end{array}
\end{equation}
where $D_\alpha$ is the corresponding diagonal degree matrix for
$W_\alpha$. It is easy to see when $\alpha=0$, these graph
Laplacians become the commonly used ones without the first step
normalization. Therefore, for each value of $\alpha$, there are
three closely connected empirical graph Laplacians.

The limit study of graph Laplacians primarily involves the limits of
two parameters, sample size $n$ and weight function bandwidth $t$.
As $n$ increases, one typically decreases $t$ to let the graph
Laplacian capture progressively a finer local structure.

With a proper rate as a function of
$n$ and $t$, the limit of $L^uf(x)$ for a given smooth function and
fixed $x$ can be  shown to be $\Delta f(x)$ when $\Om$ is a compact
submanifold of $\R^N$ without boundary and $p(x)$ is a uniform
density. This builds a connection between the
discrete graph Laplacian and the  continuous
Laplace-Beltrami operator $\Delta$ on manifolds, which in $\R^d$ can
be written as
\begin{equation}
    \Delta =\sum_{i=1}^d \frac{\d^2}{\d x_i^2}
\end{equation}
This connection is an important step in providing a theoretical
foundation for many graph Laplacian based machine learning
algorithms. For instance, harmonic functions used in \cite{zhu2003}
for semi-supervised learning is in fact a solution of a Laplace
equation, with a ``point boundary condition'' at labeled points.

The limit of $L_\alpha^r$ and its various aspects, including the
finite sample analysis, are studied
in~\cite{belkinThesis,lafon,hein,singer,gine,Hein07graphlaplacians,belkin2008,belkinCLEM}.
The basic result is that the limit of $L_\alpha^r f(x)$ for
$x\in\om$ is (up to a constant )
\begin{equation}
    \frac{1}{t}L_\alpha^r f(x)\stackrel{p}{\to} -\Delta_s f(x)=-\frac{1}{p^s}\textrm{div}[p^s
    \textrm{grad f(x)}] =-[\Delta +\frac{s}{p}\langle
    \nabla p(x), \nabla \rangle]f(x)
\end{equation}
where $\Delta_s$ is the weighted Laplacian and $s=2(1-\alpha)$.
These papers deal with the analysis of graph Laplacians at an
interior point of the manifold and do not deal with boundary
behavior. The exception to that is the analysis
in~\cite{CoifmanLafon2006}, which includes manifolds with boundary,
assuming the Neumann boundary conditions on the space of functions.
Specifically, the Taylor series for the Gaussian convolution
in~\cite[Lemma 9]{CoifmanLafon2006} involves a term containing the
normal gradient at the boundary, which can be reformulated to obtain
the limit for the graph Laplacian on the manifold boundary. However,
there is no explicit discussion of the boundary behavior as well as
its implication for learning in~\cite{CoifmanLafon2006}. Discrete
graph Laplacian is not considered in that work. We believe that
given the popularity of graph Laplacians in machine learning, the
boundary behavior of graph Laplacians deserves a more detailed
study.

In Section~\ref{sec:GraphLaplacianReview}, we state some existing
results on the limit analysis of the graph Laplacian as well as some
necessary preparatory results, which will be useful for our
analysis. Section~\ref{sec:GraphLaplacianBoundaryLimit} contains our
main Theorem~\ref{thm:limit:L}, which states that near the boundary,
the graph Laplacian converge to the normal gradient and shows the
scaling behavior and explicit rates of convergence. We also show how
the scaling changes between the boundary and the interior points of
the manifold. Numerical examples to support our analyses are
provided in Section~\ref{sec:NumericalExample}. Several important
implications of the boundary behavior of the graph Laplacian are
discussed in Section~\ref{sec:implication}.

\section{Technical Preliminaries}
\label{sec:GraphLaplacianReview} In this section, we review the
existing limit analysis of graph Laplacians $L_\alpha^u$,
$L_\alpha^r$ and $L_\alpha^s$ on points  away from the
boundary of a compact submanifold. We also provide some technical
results useful for our analysis in
Section~\ref{sec:GraphLaplacianBoundaryLimit}.

Given an undirected graph representation of the random sample set
$X$ of size $n$, the weight function with parameter $t$ is defined
as
\begin{equation}
    w_{t}(X_i,X_j)=\frac{1}{t^{d/2}}e^{-\frac{\|X_i-X_j\|^2_{\R^N}}{t}}
\end{equation}
Notice that in this Gaussian weight function, the Euclidean distance
should be used, instead of other distance, e.g., the geodesic on
manifolds. It is this critical feature that on one hand makes the
graph Laplacians computationally attractive, on the other hand has
important implications, which will be discussed in the rest this
paper.

Define the corresponding discrete degree function as
\begin{equation}
    d_{t,n}(X_i)=\frac{1}{n}\sum_{j=1}^n w_{t}(X_i,X_j)
\end{equation}
Then we first normalize the weight function to obtain
\begin{equation}
    w_{\alpha,t}(X_i,X_j)=\frac{w_{t}(X_i,X_j)}{[d_{t,n}(X_i)d_{t,n}(X_j)]^\alpha}
\end{equation}
Note that this weight function also depends on the locations of
$X_i$ and $X_j$ other than the Euclidean distance
$\|X_i-X_j\|_{\R^N}$. We use the three subscripts $\alpha, t, n$ to
emphasize the related parameters. The corresponding discrete degree
function is
\begin{equation}
    d_{\alpha,t,n}(X_i)=\frac{1}{n}\sum_{j=1}^n w_{\alpha,t}(X_i,X_j)
    =\frac{1}{n}\sum_{j=1}^n \frac{w_{t}(X_i,X_j)}{[d_{t,n}(X_i)d_{t,n}(X_j)]^\alpha}
\end{equation}
If the weight matrix for $w_t(X_i,X_j)$ is $W_{t,n}$ and the
corresponding degree matrix is $D_{t,n}$, then the normalized weight
matrix is
\begin{equation}
W_{\alpha,t,n}=D_{t,n}^{-\alpha}W_{t,n}D_{t,n}^{-\alpha}
\end{equation}
By finding the corresponding degree matrix $D_{\alpha,t,n}$, the
unnormalized graph Laplacian is
\begin{equation}
    L_{\alpha,t,n}^u=D_{\alpha,t,n}-W_{\alpha,t,n}
\end{equation}
and the other two normalized versions are defined accordingly as
$L_{\alpha,t,n}^r=I-D_{\alpha,t,n}^{-1}W_{\alpha,t,n}$ and
$L_{\alpha,t,n}^s=I-D_{\alpha,t,n}^{-1/2}W_{\alpha,t,n}D_{\alpha,t,n}^{-1/2}$.

For a fixed smooth function $f(x)$, and any $x\in \Om$ (including
the samples and unseen points), define $L_{\alpha,t,n}^u f(x)$ as
the following,
\begin{equation}
    L_{\alpha,t,n}^u f(x)=\frac{1}{n}\sum_{j=1}^n w_{\alpha,t,n}(x,X_j)(f(x)-f(X_j))
\end{equation}
and similarly for the random walk normalized graph Laplacian
\begin{equation}
    L_{\alpha,t,n}^r f(x)=\frac{\frac{1}{n}\sum_{j=1}^n w_{\alpha,t,n}(x,X_j)(f(x)-f(X_j))}{d_{\alpha,t,n}(x)}
    =f(x)-\frac{1}{n}\sum_{j=1}^n
    \frac{w_{\alpha,t,n}(x,X_j)}{d_{\alpha,t,n}(x)} f(X_j)
\end{equation}
For $L_{\alpha,t,n}^s$, it can be shown that $L_{\alpha,t,n}^s f(x)=
D_{\alpha,t,n}^{-1/2}L_{\alpha,t,n}^u F(x)$ where
$F(x)=D_{\alpha,t,n}^{-1/2}f(x)$. Similar notions also apply to the
degree functions. The intuition is that we treat vector
$(f(X_1),\cdots,f(X_n))^T$ as a sampled continuous function $f(x)$
on $\Om$. As $n\to \infty$, the vector becomes ``closer and closer''
to $f(x)$.

Three useful convergence results for the interior points will be
needed in our analysis~\cite{Hein07graphlaplacians}:
\begin{equation}
    d_{t,n}(x) \stackrel{\rm a.s.}{\longrightarrow} \ C_1 p(x)
\end{equation}
where $C_1=\int_{\R^d}K(\|u\|^2)du$, and
\begin{equation}
    d_{\alpha,t,n}(x) \stackrel{\rm a.s.}{\longrightarrow} C_1^{1-2\alpha}
    [p(x)]^{1-2\alpha}
\end{equation}
The following limit shows that the graph Laplacian on points that
are away from the boundary converge to the density weighted
Laplace-Beltrami operator with a proper rate of $n$ and $t$.
\begin{equation}
\frac{1}{t} L_{\alpha,t,n}^rf(x) \stackrel{\rm a.s.}
{\longrightarrow} -\frac{C_2}{2C_1}\Delta_s f(x)
\end{equation}
where $C_2=\int_{\R^d}K(\|u\|^2)u_1^2du$. The limits of
$L_{\alpha,t,n}^uf(x)$ and $L_{\alpha,t,n}^sf(x)$ can be found in
\cite{Hein07graphlaplacians}

On a $d$-dimensional smooth manifold $\Om$, for an interior point
$x$, the small neighborhood around $x$ is locally equivalent to
whole space $\R^d$, while for a point on the boundary of $\Om$, i.e.
$x\in \d\Omega$, the small neighborhood around $x$ is locally mapped
into a {\em half space} defined as $\R^d_{+}=\{x\in \R^d,x_1\ge
0\}$. This is a key fact that will be used in this paper.

Next we will need a concentration inequality for the finite sample
analysis of the graph Laplacian.
\begin{lemma}(McDiarmid's inequality)\label{lemma:McDiarmid}
Let $X_1,\cdots,X_n$, $\hat{X}_i$ be i.i.d. random variables of
$\R^N$ from density $p(x)\in C^\infty(\Om)$, $0<a\le p(x)\le b
<\infty$, $|f|<M$ and $f$ satisfies
\begin{equation}
    \sup_{X_1,\cdots,X_n,\hat{X}_i}
    |f(X_1,\cdots,X_i,\cdots,X_n)-f(X_1,\cdots,\hat{X}_i,\cdots,X_n)|\le
    c_i, \quad \textrm{ for }1\le i\le n
\end{equation}
then
\begin{equation}
    P(|f(X_1,\cdots,X_n)-\E[f(X_1,\cdots,X_n)]|>\epsilon)\le 2\exp{(-\frac{2\epsilon^2}
    {\sum_{i=1}^n c_i^2})}
\end{equation}
\end{lemma}

\section{Analysis of Graph Laplacian Near Manifold Boundary}\label{sec:GraphLaplacianBoundaryLimit}

In this section, we analyze the limits of the Laplacians
$L_{\alpha,t,n}^rf(x)$, $L_{\alpha,t,n}^uf(x)$ and
$L_{\alpha,t,n}^sf(x)$ when $x$ is on or near the boundary of
manifold $\Om$. The argument roughly follows the lines of the
convergence arguments in~\cite{belkinThesis,hein,CoifmanLafon2006}.

To fix the notation,  in the rest of this paper, we use expressions
without subscript $n$ to indicate the corresponding limit as $n\to
\infty$, and expressions without subscript $t$ to for the limits as
$t\to 0$.
\begin{equation}
\begin{array}{rl}
    w_{t}(x,y)=&K_t(x,y)=\frac{1}{t^{d/2}}K(x,y)\\
                   &=\frac{1}{t^{d/2}}K(\frac{\|x-y\|_{\R^N}^2}{t})=\frac{1}{t^{d/2}}e^{-\frac{\|x-y\|^2_{\R^N}}{t}}\\
    \\
    d_{t}(x)=&\int_{\Om} w_t(x,y) p(y) dy\\
    \\
    w_{\alpha,t}(x,y)=& \frac{w_t(x,y)}{ [d_t(x)d_t(y)]^\alpha}\\
    \\
    d_{\alpha, t}(x)=&\int_{\Om} \frac{w_t(x,y)}{[d_t(x)d_t(y)]^\alpha} p(y) dy\\
\end{array}
\end{equation}
For smooth $f(x)$ and $p(x)$,
\begin{equation}
    L_{\alpha,t}^u f(x)=\int_{\Om} w_{\alpha,
    t}(x,y)(f(x)-f(y))p(y)dy = d_{\alpha,t}(x) L_{\alpha,t}^rf(x)
\end{equation}
and
\begin{equation}
    L_{\alpha,t}^rf(x)=
    f(x)- \int_{\Om}
    \frac{w_{\alpha, t}(x,y)}{d_{\alpha,t}(x)}f(y)p(y)dy
\end{equation}
Similarly, $L_{\alpha,t}^s f(x)$ can be rewritten as $L_{\alpha,t}^s
f(x)=d_{\alpha,t}^{-1/2}(x) L_{\alpha,t}^u F(x)$ with
$F(x)=d_{\alpha,t}^{-1/2}(x) f(x)$.

Next we show the limits of the graph Laplacians on boundary point
$x\in \d\Omega$ as $t\to 0$ and $n\to \infty$ at a proper rate, when
$\Om$ has a {\em smooth} boundary.
\begin{theorem}
\label{thm:limit:L} Let $f\in C^3(\Om)$, $|f(x)|\le M$, $p(x)\in
C^\infty(\Om)$, $0<a\le p(x)\le b<\infty$, $\d \Omega$ be a smooth
boundary of $\Omega$, $x\in \d\Omega$, and $t$ be sufficiently
small, then for the unnormalized graph Laplacian $L_{\alpha,t,n}^u$
\begin{equation}
P(|\frac{1}{\sqrt{t}}
L_{\alpha,t,n}^uf(x)-[-\frac{C_4}{C_3^{2\alpha}}[p(x)]^{1-2\alpha}\d_\n
f(x)]|\ge \epsilon)\le 2 \exp{(-\frac{nt^{d+1}\epsilon^2}{C_0})}
\end{equation}
for the random walk normalized graph Laplacian $L_{\alpha,t,n}^r$
\begin{equation}
P(|\frac{1}{\sqrt{t}} L_{\alpha,t,n}^rf(x)-[-\frac{C_4}{C_3}\d_\n
f(x)]|\ge \epsilon)\le 2 \exp{(-\frac{nt^{d+1}\epsilon^2}{C_0})}
\end{equation}
and for the symmetric normalized graph Laplacian $L_{\alpha,t,n}^s$
\begin{equation}
P(|\frac{1}{\sqrt{t}}
L_{\alpha,t,n}^sf(x)-[-\frac{C_4}{C_3}[p(x)]^{1/2-\alpha}\d_\n
(\frac{f(x)}{[p(x)]^{1/2-\alpha}})]|\ge \epsilon)\le 2
\exp{(-\frac{nt^{d+1}\epsilon^2}{C_0})}
\end{equation}
where $s=2(1-\alpha)$, {\bf n} is inward normal direction, $C_0$
only depends on $M,a,b$ and $\alpha$, $C_3=1/2 \int_{\R^d}
K(\|u\|^2) du$, and $C_4=\int_{\R_{+}^d} K(\|u\|^2)u_1 du$.
\end{theorem}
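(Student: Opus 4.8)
The plan is to first establish the continuous limit (as $n\to\infty$, so the empirical sums become integrals) and then control the deviation between the empirical and continuous quantities via McDiarmid's inequality. Let me think about how to prove each piece.

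The core computation is a local Taylor expansion of a Gaussian convolution integral at a boundary point. At an interior point the Gaussian integrates over all of $\R^d$, the odd-order terms vanish by symmetry, and the leading term is the Laplacian (order $t$). At a boundary point the integration domain is locally a half-space $\R^d_+=\{u_1\ge 0\}$, so the symmetry that killed the first-order term is broken — and the surviving first-order term is precisely the normal derivative, scaled by $\sqrt t$. This explains the $1/\sqrt t$ normalization and the constant $C_4=\int_{\R^d_+}K(\|u\|^2)u_1\,du$ (as opposed to $C_2=\int_{\R^d}K(\|u\|^2)u_1^2\,du$ in the interior). So the whole structure of the theorem hinges on this broken-symmetry half-space integral.

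Let me sketch the steps.

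Step 1: Compute the continuous degree and unnormalized Laplacian at the boundary. Using geodesic normal coordinates at $x\in\d\Omega$, with $u_1$ the inward normal coordinate, I substitute $y=x+\sqrt t\,u$ so that $w_t(x,y)=t^{-d/2}K(\|u\|^2)$ and $dy = t^{d/2}(1+O(t))\,du$. The domain $\Omega$ becomes locally $\R^d_+$ up to curvature corrections of the boundary, which contribute higher-order terms. Expanding $p(y)=p(x)+\sqrt t\,u\cdot\nabla p(x)+O(t)$ gives $d_t(x)=C_3\,2\,[\dots]$ — more precisely the half-space integral of the even part yields the leading constant $C_3 p(x)$ plus $O(\sqrt t)$ corrections. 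For $L^u_{\alpha,t}f(x)=\int w_{\alpha,t}(x,y)(f(x)-f(y))p(y)\,dy$, I expand $f(x)-f(y)=-\sqrt t\,u\cdot\nabla f(x)+O(t)$. The leading surviving term is the half-space integral $\int_{\R^d_+}K(\|u\|^2)u_1\,du = C_4$ times the normal component of $\nabla f$, giving order $\sqrt t$; hence the $1/\sqrt t$ rescaling produces a finite limit $-C_4[p(x)]^{1-2\alpha}/C_3^{2\alpha}\,\d_\n f(x)$ once the $\alpha$-normalization factors $[d_t(x)d_t(y)]^{-\alpha}\to(C_3 p)^{-2\alpha}$ are folded in. I must be careful that the $\sqrt t$ in the denominators of $w_{\alpha,t}$ (through $d_t$) only contribute higher order, which they do because $d_t(x)\to C_3 p(x)$ is order $1$.

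Step 2: Derive the $L^r$ and $L^s$ limits from $L^u$. The random-walk case follows by dividing by $d_{\alpha,t}(x)\to C_3^{1-2\alpha}[p(x)]^{1-2\alpha}$, which cancels the density powers and leaves $-\tfrac{C_4}{C_3}\d_\n f(x)$. The symmetric case uses the stated identity $L^s_{\alpha,t}f(x)=d_{\alpha,t}^{-1/2}(x)L^u_{\alpha,t}F(x)$ with $F=d_{\alpha,t}^{-1/2}f$; applying Step 1 to $F$ and using the product rule $\d_\n(F)$ produces the $[p(x)]^{1/2-\alpha}\d_\n(f/[p(x)]^{1/2-\alpha})$ form after substituting $d_{\alpha,t}\sim C_3^{1-2\alpha}p^{1-2\alpha}$.

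Step 3: Finite-sample concentration. The empirical object $\tfrac{1}{\sqrt t}L^u_{\alpha,t,n}f(x)$ is (up to the nonlinear $\alpha$-normalization) an average of bounded terms in $X_1,\dots,X_n$. I would bound the bounded-difference constants $c_i$: changing one sample $\hat X_i$ changes a single summand, so each $w_{\alpha,t}(x,X_i)(f(x)-f(X_i))/\sqrt t$ is bounded by roughly $t^{-d/2}\cdot M/\sqrt t$ over $n$, giving $c_i=O(1/(n\,t^{(d+1)/2}))$ and hence $\sum c_i^2 = O(1/(n\,t^{d+1}))$. Plugging into Lemma~\ref{lemma:McDiarmid} yields the exponent $-nt^{d+1}\epsilon^2/C_0$. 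The $\alpha$-normalization introduces dependence on all samples through $d_{t,n}$, so strictly one first shows $d_{t,n}$ concentrates (at a faster rate) and treats it as essentially deterministic, or applies McDiarmid to the composite functional after verifying its bounded-difference constants are of the same order.

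The main obstacle I expect is the \emph{boundary geometry bookkeeping} in Step 1: justifying that the local diffeomorphism straightening $\d\Omega$ to the hyperplane $\{u_1=0\}$, together with the Jacobian and the curvature of both the manifold and its boundary, only affect terms of order $o(\sqrt t)$, so that they do not corrupt the $\sqrt t$-order term that produces the answer. A secondary delicate point is handling the $\alpha$-normalization uniformly, since $w_{\alpha,t}$ depends on the empirical degrees $d_{t,n}(x)$ and $d_{t,n}(X_j)$, which are themselves random; I would argue that replacing these by their limits $C_1 p(\cdot)$ incurs only higher-order error that is absorbed into $C_0$ and $\epsilon$.
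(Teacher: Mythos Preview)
Your proposal is correct and follows essentially the same route as the paper: local tangent-space/half-space approximation with Taylor expansion to extract the $\sqrt t$ normal-derivative term (Steps 1--3 in the paper), reduction of $L^r$ and $L^s$ to $L^u$ via the degree function and the identity $L^s f = d^{-1/2}L^u(d^{-1/2}f)$ (Step 4), and McDiarmid with bounded differences $c_i=O(1/(n t^{(d+1)/2}))$ for the finite-sample bound (Step 5). The paper handles the $\alpha$-dependence exactly by your second option---applying McDiarmid directly to the composite functional with the stated bounded-difference constant $\frac{2M}{n t^{(d+1)/2} a^{2\alpha}}$---rather than first concentrating $d_{t,n}$ separately.
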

\begin{proof}
We first show the limit of the expectation of $L_{\alpha,t}^uf(x)$
as $t\to 0$ in step 1 to 3. Then the limit of $L_{\alpha,t}^rf(x)$
and $L_{\alpha,t,n}^sf(x)$ can easily be found with the help of the
limit of discrete degree function $d_{\alpha,t}(x)$. At last, we
obtain the finite sample results by applying Lemma
(\ref{lemma:McDiarmid}).

For a sufficiently small $t$, let $\Omega_1$ be the set of points
that are within distance $O(\sqrt{t})$ from the boundary $\d\Omega$
(a thin layer of ``shell''), and $\Omega_0={\Om}/\Omega_1$. We first
show that for a small $t$, $L_{\alpha,t}^uf(x)$ is approximated by
two different terms on $\Omega_0$ and $\Omega_1$, and more
importantly these two terms have different orders of $t$. Then
together with the limit of $d_{\alpha,t}(x)$, we can find the limit
of $L_{\alpha,t}^rf(x)$ and $L_{\alpha,t}^sf(x)$.

Step 1: The key step for the limit analysis of graph Laplacians is
the approximation on the manifold. Consider
\begin{equation}\label{equ:aimIntegral}
\begin{array}{rl}
    L_{\alpha,t}^uf(x)=&\int_{\Om}
    \frac{K_t(x,y)}{[d_t(x)d_t(y)]^\alpha}(f(x)-f(y))p(y)dy\\
    \\
    =&[d_t(x)]^{-\alpha}\int_{\Om}K_t(x,y)[d_t(y)]^{-\alpha}(f(x)-f(y))p(y)dy\\
\end{array}
\end{equation}
This integral is on the manifold $\Om$. In order to study the the
limit of this integral when $t\to 0$, we can approximate the
integral on an unknown smooth manifold by an integral on its tangent
space at each point $x$ when $t$ is small such that the
approximation errors of each step are comparable. For $x\in \Omega$,
the tangent space is the whole space $\R^d$, while for $x\in
\d\Omega$, the tangent space is the half space $\R_{+}^d$ ($x_1\ge
0$).

When $y\in \Om$ is within an Euclidean ball of radius $O(t^{1/2})$
centered at $x$, in the local coordinate around a fixed $x$, the
origin is point $x$, and let $s=(s_1,\cdots, s_d)$ be the local
geodesic coordinate of $y$, $u=(u_1,\cdots, u_d)$ be the projection
of $y$ on the tangent space at $x$. Then we have the following
important approximation (see \cite[Chapter 4.2]{belkinThesis} and
\cite[Appendix B]{CoifmanLafon2006}).
\begin{equation}\label{equ:basicApprox}
\begin{array}{rl}
    s_i=&u_i + O(t^{3/2})\\
    \\
    \|x-y\|_{\R^N}^2=&\|u\|_{\R^d}^2+O(t^{2})\\
    \\
    \textrm{det}(\frac{dy}{du}) = &1 + O(t)\\
\end{array}
\end{equation}

Step 2: Now we are ready to approximate each of the five terms in
integral (\ref{equ:aimIntegral}) when the integral is taken inside a
ball centered at $x$ having radius $O(t^{1/2})$ in
$\|\cdot\|_{\R^N}$ norm. Notice that $\|u\|_{\R^d}\sim O(t^{1/2})$.
\begin{equation}\label{equ:detailedApprox}
\begin{array}{rl}
    K(\frac{\|x-y\|^2_{\R^N}}{t})= & K(\frac{\|u\|^2_{\R^d}}{t})+O(t^2)\\
    \\
    d_t^{-\alpha}(y) = & d_t^{-\alpha}(x) - \alpha d_t^{-\alpha-1}(x) s^T \nabla d_t(x) + O(s^2)\\
    \\
    =&d_t^{-\alpha}(x) - \alpha d_t^{-\alpha-1}(x) u^T \nabla d_t(x)
    + O(t)\\
    \\
    f(x)-f(y)= & -s^T\nabla f(x)-\frac{1}{2}s^TH(x)s + O(s^3)\\
    \\
    =&-u^T \nabla f(x) - \frac{1}{2}u^TH(x)u + O(t^{3/2})\\
    \\
    p(y)=&p(x)+s^T\nabla p(x) + O(s^2)\\
    \\
    =&p(x)+u^T\nabla p(x)+O(t)\\
\end{array}
\end{equation}
where $H(x)$ is the Hessian of $f(x)$ at point $x$. Notice that, the
order inside of the big oh is determined by the larger one between
the approximation error of $u$ to $s$ which is $O(t^{3/2})$, and the
Taylor expansion error on manifold as a function of $s$. The other
observation is that, the order of the product of these terms is
determined by the third term ($f(x)-f(y)$), the highest order of
which is $O(t^{1/2})$, with the next ones as $O(t)$ and
$O(t^{3/2})$. This means it is enough to keep the approximation
terms up to order $t^{1/2}$.

Combing all the approximation together in a ball of radius
$O(t^{1/2})$ around $x$, with a change of variable $u\to t^{1/2}u$,
we can obtain $L_{\alpha,t}^uf(x)$
\begin{equation}
\begin{array}{rl}
    L_{\alpha,t}^uf(x)=&\int_{\Om}
    \frac{K_t(x,y)}{[d_t(x)d_t(y)]^\alpha}(f(x)-f(y))p(y)dy\\
    \\
    =&\int_{\Om \cap \B_1(x) }
    \frac{K_t(x,y)}{[d_t(x)d_t(y)]^\alpha}(f(x)-f(y))p(y)dy+ O(t^{3/2})\\
    \\
    =&-\frac{1}{t^{d/2}d_t^\alpha(x)}\int_{\Om \cap \B_2(x)} K(\|u\|^2_{\R^d})[(\frac{1}{d_t^\alpha(x)}-
    \sqrt{t} \frac{\alpha u^T \nabla d_t(x)}{(d_t(x))^{\alpha+1}})
    (\sqrt{t}u^T \nabla f(x)+\frac{t}{2} u^T H(x) u )\\
    &  \qquad \qquad \qquad \qquad\ \ (p(x)+\sqrt{t}u^T\nabla p(x))]t^{d/2}du + O(t^{3/2})\\
    \\
    =&-\frac{1}{d_t^\alpha(x)}\int_{\T(x)} K(\|u\|^2_{\R^d})\{\sqrt{t}[\frac{p(x)}{d_t^\alpha(x)}(u^T\nabla f(x))]+\\
    & \qquad \qquad \qquad \qquad \quad t[\frac{u^T \nabla f(x)\times u^T \nabla
    p(x)}{d_t^\alpha(x)}-
    \alpha\frac{p(x)u^T\nabla d_t(x)\times u^T \nabla f(x)}
    {d_t^{\alpha+1}(x)}+\frac{1}{2}\frac{p(x)}{d_t^\alpha(x)}u^TH(x)u]\} du\\
    &  \qquad \qquad \qquad \qquad \ \ +O(t^{3/2})
\end{array}
\end{equation}
where $\B_1(x)$ is a ball of radius $O(t^{1/2})$ in
$\|\cdot\|_{\R^N}$ norm centered at $x$, while $\B_2(x)$ is a ball
of radius $O(t^{1/2})$ in $\|\cdot\|_{\R^d}$ norm, and $\T(x)$ is
the tangent space at point $x$. For a sufficiently small $t$, the
first step replaces the integral over the whole $\Om$ with ball
$\B_1(x)$, generating an error $O(t^{3/2})$ \cite[Appendix
B]{CoifmanLafon2006}. Then this integral is the same as the integral
over a ball on the manifold $\Om$. Finally, for an interior point
$x$, $\T(x)=\R^d$, which means function $K(\|u\|^2_{\R^d})$ is a
even function of $u$. When taking the integral, the first term which
has order $\sqrt{t}$ is odd and therefore vanishes. Then the three
left terms that are of order $t$ inside the integral are exactly the
weighted Laplacian at $x$, which is of order $t$. For a boundary
point $x$, $\T(x)=\R_+^d$. Next we study the interior points.

\begin{figure}[h]
\centering \epsfig{file=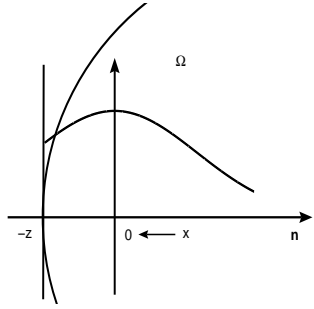, height=4cm, width=5.5cm}
\caption{Gaussian weight at $x$ near the
boundary.}\label{fig:bd:integral}
\end{figure}

Step 3: In Figure.~\ref{fig:bd:integral}, $x\in \Omega_1$ (the
``shell'') is a point near the boundary, $\n$ is the inward normal
direction, and $-z$ is the nearest boundary point to $x$ along $\n$.
In the local coordinate system, $x$ is the origin, and along the
normal direction the Gaussian convolution is from $-z$ to $+\infty$,
which is not symmetric. Therefore, $K(\|u\|^2_{\R^d})$ is not an
even function in the normal direction, so the highest order term is
the order $O(\sqrt{t})$ term.

In this case, all the odd terms of $u_i$ still will vanish in all
directions except the normal direction $\n$, and the most important
point is that the leading term along the normal direction is of
order $\sqrt{t}$, while for interior points it is $t$. Next we
assume $u_1$ is the normal direction.
\begin{equation}
    \frac{1}{\sqrt{t}}L_{\alpha,t}^uf(x)=-\frac{1}{d_t^{2\alpha}(x)}p(x)\d_\n f(x)\int_{-\infty}^{+\infty} \cdots \int_{-\infty}^{+\infty}\int_{-z}^\infty
    K(\|u\|^2_{\R^d})u_1 du_1du_2\cdots du_d + O(\sqrt{t})
\end{equation}
where $z$ is the distance to the nearest point of $x$ on the
boundary $\d\Omega$ along the normal direction ($z\ge 0$) as shown
in Figure \ref{fig:bd:integral}. When $t\to 0$, $z\to 0$ in the
local coordinate system
\begin{equation}
    \lim_{t\to 0}\frac{1}{\sqrt{t}}L_{\alpha,t}^uf(x)
    =-\frac{C_4}{C_3^{2\alpha}} [p(x)]^{1-2\alpha} \d_\n f(x)
\end{equation}
where $C_3=\int_{\R^d_{+}}K(\|u\|^2)du=1/2 C_1$,
$C_4=\int_{\R_{+}^d} K(\|u\|^2)u_1 du$. This result also needs the
following limits, which generalize \cite[Proposition 2.33]{hein} to
points on the boundary.

\begin{equation}
    \lim_{t\to 0} d_{\alpha,t}(x)=\bigg \{
\begin{array}{rl}
    &C_1^{1-2\alpha}p^{1-2\alpha}(x), \textrm{ for }x\in \Omega\\
    \\
    &C_3^{1-2\alpha}p^{1-2\alpha}(x), \textrm{ for }x\in \d\Omega\\
\end{array}
\end{equation}

Step 4: The normalized graph Laplacians can be obtained by
normalization through $d_{\alpha,t}(x)$. Then the limit of the
random walk normalized graph Laplacian is (we include the limit for
interior point $x$ for comparison)
\begin{equation}
\begin{array}{rl}
    \lim_{t\to 0} \frac{1}{t}L_{\alpha,t}^rf(x)=&-\frac{C_2}{2C_1}\Delta_s f(x), \textrm{ for }x\in \Omega\\
    \\
    \lim_{t\to 0} \frac{1}{\sqrt{t}}L_{\alpha,t}^rf(x)=&-\frac{C_4}{C_3}\d_\n f(x), \textrm{ for }x\in \d\Omega\\
\end{array}
\end{equation}
As for the limit of $L_{\alpha,t,n}^s$, it can be shown that
$L_{\alpha,t,n}^s f(x)= D_{\alpha,t,n}^{-1/2}L_{\alpha,t,n}^u F(x)$
where $F(x)=D_{\alpha,t,n}^{-1/2}f(x)$. Then the limit analysis
follows easily.

Step 5: Consider
\begin{equation}\label{equ:LaplacianSUM}
    \frac{1}{\sqrt{t}}L_{\alpha,t,n}^uf(x)=\frac{1}{n\sqrt{t}}\sum_{i=1}^n
    K_t(x,X_i)[d_{\alpha,t,n}(x)d_{\alpha,t,n}(X_i)]^{-\alpha}[f(x)-f(X_i)]
\end{equation}
Notice that in the sum, different terms are not independent, since
the degree $d_{\alpha,t,n}(x)$ and $d_{\alpha,t,n}(X_i)$ includes
sums of all the random variables. Therefore, we need to use the
McDiarmid's inequality in this step. The maximum change if we change
a random variable is bounded by
\begin{equation}
    \frac{1}{nt^{(d+1)/2}}\cdot \frac{1}{a^{2\alpha}} \cdot 2M
\end{equation}
The maximum change happens when we move a point $X_i$ from a high
density region with a minimum function value to a point $\hat{X}_i$
in a low density region with a maximum function value. Similar
analyses apply to normalized graph Laplacians. Then We conclude the
proof by applying the McDiarmid's inequality.
\end{proof}\\
Notice that the error rate essentially comes from the McDiarmid's
inequality. When $\alpha=0$, all terms in equation
(\ref{equ:LaplacianSUM}) are i.i.d., then we can use the Bernstein's
inequality to obtain a better rate for $L^u$. For $L^r$, an even
better rate can be obtained as shown by \cite{singer}. When
$\alpha\ne 0$, although strictly speaking the terms in equation
(\ref{equ:LaplacianSUM}) are not i.i.d., since $d_{\alpha,t,n}(x)$
is really an average of all the samples, it is almost a function of
$x$ alone, and $d_{\alpha,t,n}(X_i)$ a function of $X_i$ alone. Then
in this case, we believe it is possible to obtain a better error
rate.

Together with the existing analysis for interior points, we have the
following implication of Theorem (\ref{thm:limit:L})
\begin{equation}
\begin{array}{rcl}
\frac{1}{t} L_{\alpha,t,n}^rf(x) &\approx &
-\frac{C_2}{2C_1}\Delta_s
f(x), \textrm{ for }x\in \Omega\\
\\
\frac{1}{\sqrt{t}} L_{\alpha,t,n}^rf(x) &\approx &
-\frac{C_4}{C_3}\d_\n f(x), \textrm{ for }x\in \d\Omega
\end{array}
\end{equation}
Therefore, the graph Laplacian converges to a different limit on
$x\in \d\Omega$ from that on $x\in \Omega$. More importantly, these
two limits are of different orders, one is $O(t)$ while the other is
$O(\sqrt{t})$. However, in practice, when we apply the normalization
step, we do not know where the boundary is, and always apply a {\em
global} normalization $\frac{1}{t}$ for all $x\in \Om$ in order to
obtain the weighted Laplacian in the limit. For a small $t$

\begin{equation}
\frac{1}{t}L_{\alpha,t}^rf(x)=\bigg\{
\begin{array}{rl}
& -\frac{C_2}{2C_1}\Delta_s f(x)+O(t^{1/2}), \textrm{ for }x\in \Omega_0\\
\\
& -\frac{C_4}{C_3}\frac{1}{\sqrt{t}}\d_\n f(x)+O(1), \textrm{ for
}x\in \Omega_1
\end{array}
\end{equation}
Notice that the $O(1)$ error only happens on a ``shell'' $\Omega_1$
having volume $O(\sqrt{t})$. For $f(x)$ such that $\d_\n f(x)\ne 0$
on the boundary point $x$ with enough data points, we have that for
small values of $t$
\begin{equation}\label{equ:blowup}
 \frac{1}{t}L_{\alpha,t}^r f(x)=O\left(\frac{1}{\sqrt{t}}\right)
\end{equation}

\section{Numerical Examples }\label{sec:NumericalExample}
In this section, we explore the boundary behavior of the graph
Laplacian by studying numerical examples.
\begin{figure}[h]
\vskip 0.2in
\begin{center}
\subfigure[{$\frac{1}{t}L_{\alpha,t,n}^r f(x) \textrm{ over }
[1,2]$}]{
\includegraphics[width=0.3\columnwidth]{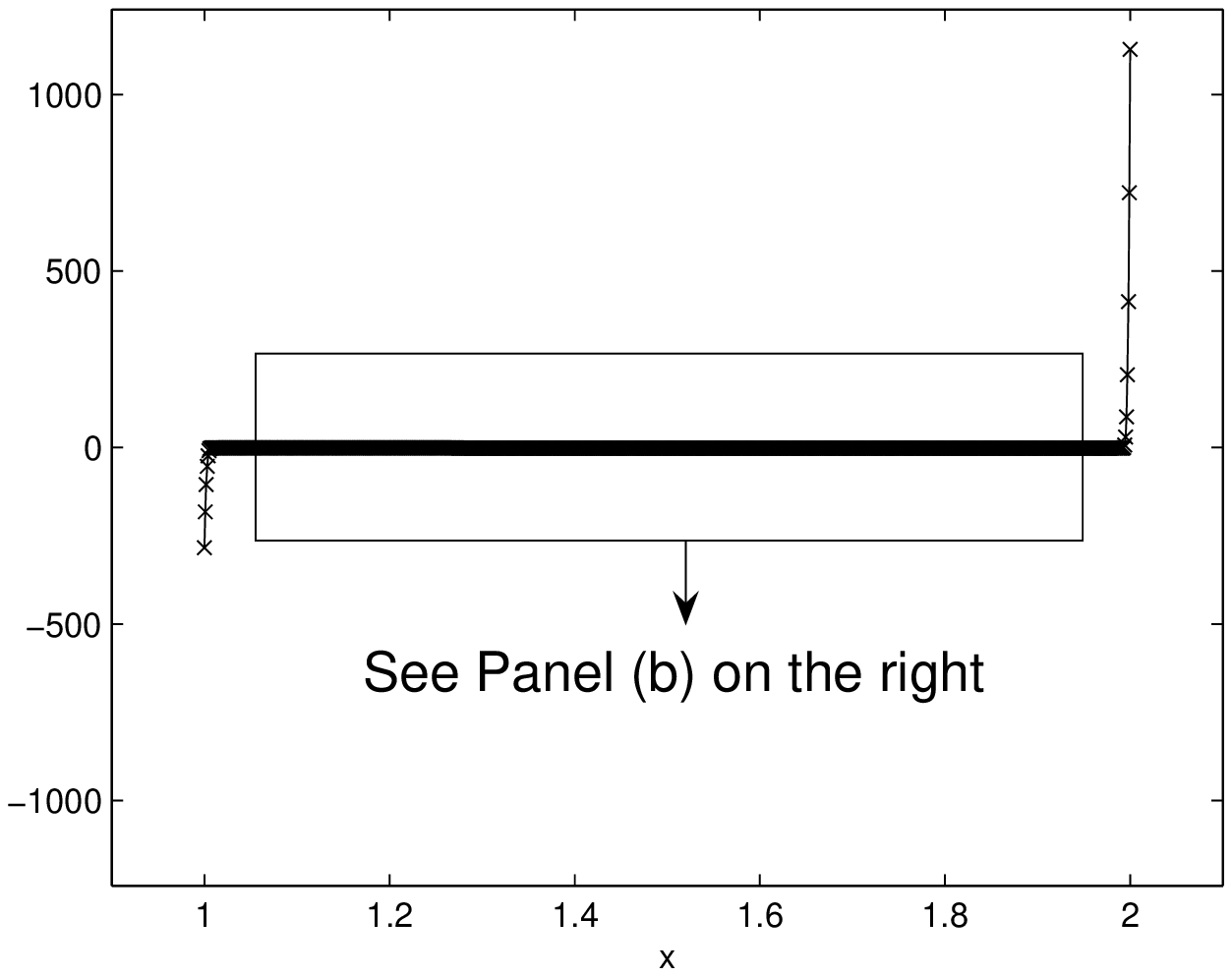}
}\subfigure[{$\frac{1}{t}L_{\alpha,t,n}^r f(x) \textrm{ over }
[1.1,1.9]$}]{
\includegraphics[width=0.3\columnwidth]{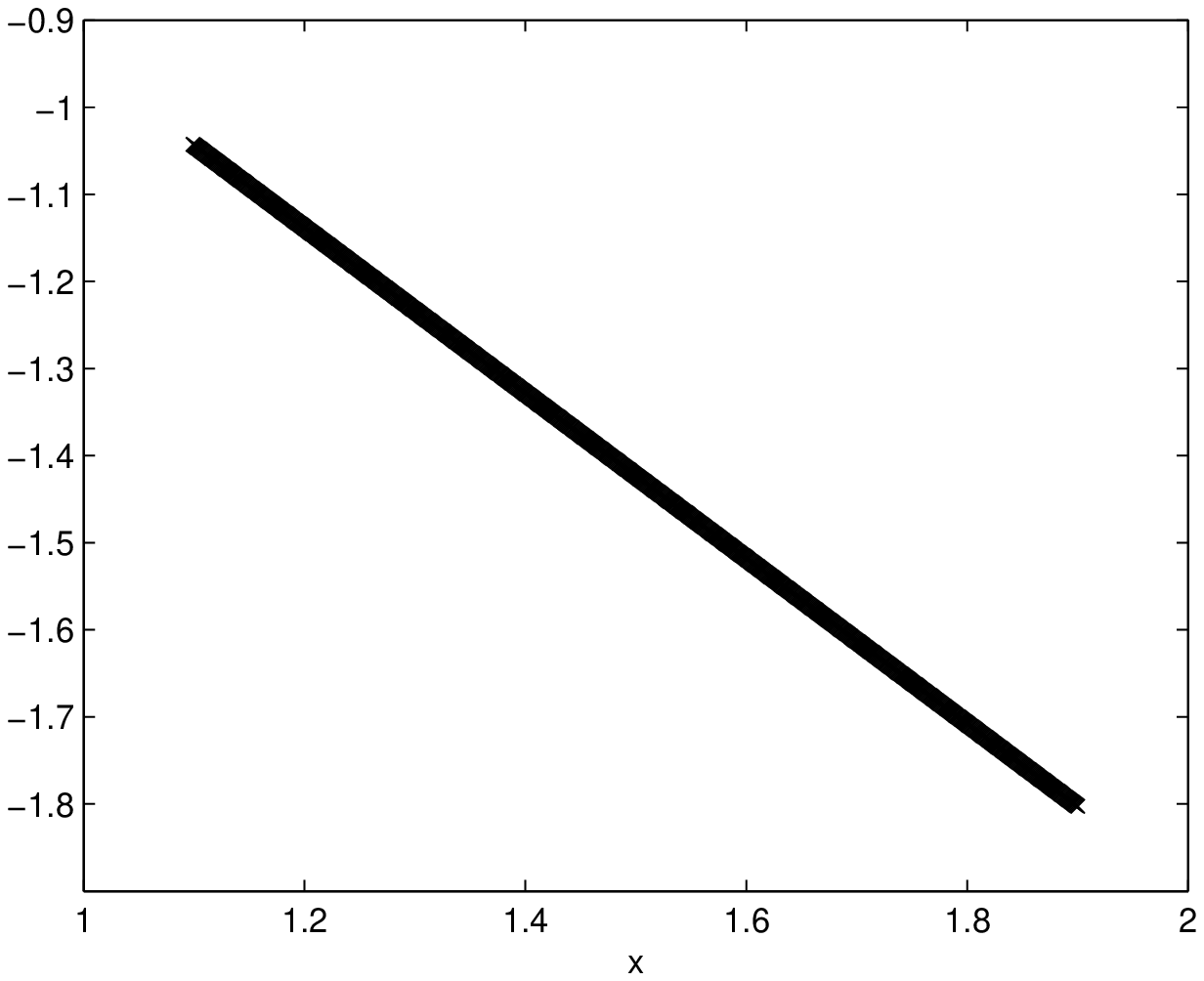}
} \subfigure[$\log(\frac{1}{t}|L_{\alpha,t,n}^r f(2)|)$ vs
$\log(t)$]{
\includegraphics[width=0.3\columnwidth]{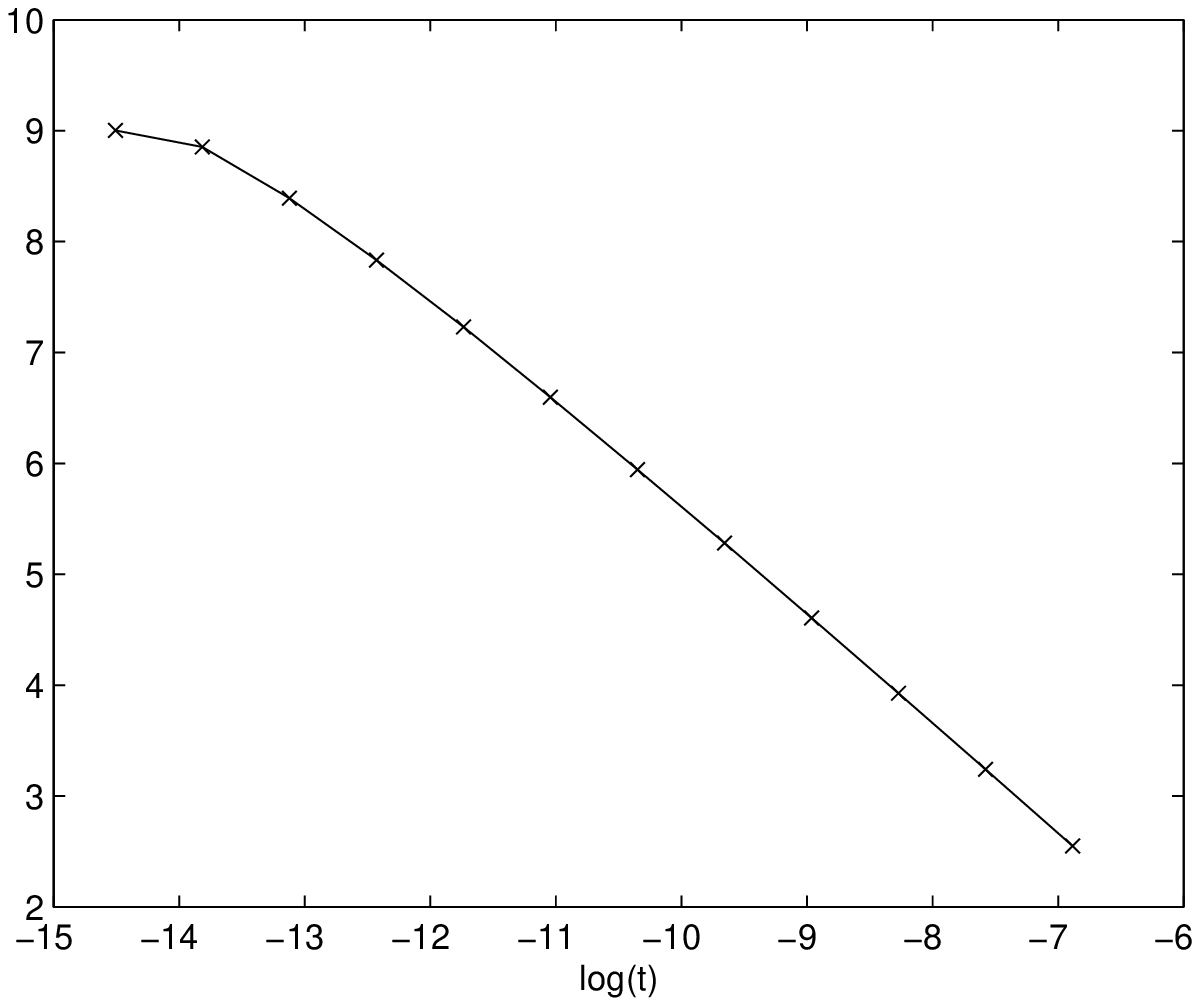}}
\caption{$\frac{1}{t}L_{\alpha,t,n}^r f(x)$ with $f(x)=x^3$ over
$[1,2]$.} \label{fig:blowup}
\end{center}
\end{figure}

\subsection{Graph Laplacian on the Boundary}
{\bf Example 1.}  We take  $\Om = [1,2]$, and  $f(x)=x^3$. The
values of  $\frac{1}{t}L_{\alpha,t,n}^r f(x)$ with $\alpha=0$ for
$1000$ points sampled from a uniform distribution (equal-spaced
points) and $t=10^{-5}$ are shown in Panel (a) in Figure
(\ref{fig:blowup}). As expected, we see that the values at the
boundary are much larger than those inside the domain and are
consistent with $-(x^3)' = -3x^2$ (the value at $2$ is roughly
$4$-times of the value at $1$) up to a scaling factor\footnote{The
positive value at $x=2$ is the result of the normal direction
pointing inward (left).}.

In Panel (b) we show the interior $[1.1, 1.9]$  of the interval
where the function is indeed the Laplacian $-(x^3)'' = -6x$ up to a
scaling factor. In Panel (c) we analyze the scaling of the graph
Laplacian on the boundary as a function of $t$ in the log-log
coordinates. We see that $\log \frac{1}{t}|L_{\alpha,t,n}^r f(2)|$
is close to a linear function of $\log(t)$ with slope approximately
$-\frac{1}{2}$ as you would expect from the scaling factor
$\frac{1}{\sqrt{t}}$.

{\bf Example 2}. Next we analyze  the boundary behavior for a simple low
dimensional manifold. Let $\Om$ be half a unit sphere ($z\ge
0$), which is a $2$-dimensional submanifold  in $\R^3$. The boundary
is a unit circle $\{(x,y,z):(x^2+y^2=1,z=0)\}$. We take
$f(x,y,z)=xz$, then the negative inward normal gradient on the
boundary is
\begin{equation}
-\d_\n f(x)=-(z,0,x)(0,0,1)^T=-x
\end{equation}
where $(z,0,x)$ is the gradient of $f(x,y,z)$, and $(0,0,1)$ is the
inward normal direction. This means the negative normal gradient of
$f$ along the inward normal direction on the boundary of a half
sphere is a linear function in $x$ with a negative slope.

We generate a uniform sample set of $2000$ points on a half sphere
and compute a vector $g=\frac{1}{\sqrt{t}}L_{\alpha,t,n}^r f(X)$
with $\alpha=0$, $t = 0.5$. We pick the set  $B=\{(x,y,z)\in X |
0\le z\le 0.05\}$ to be points near the boundary. The dependence
between $\frac{1}{\sqrt{t}}L_{\alpha,t,n}^r f(x,y,z)$ and $x$ for
$2000$ data points sampled from the uniform distribution on the half
sphere is plotted in Figure~(\ref{fig:bd:half:shpere}) and is
consistent with our expectation.

\begin{figure}[h]
\centering \epsfig{file=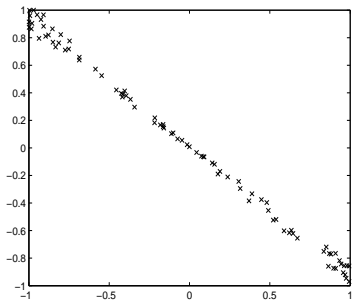, height=4.5cm, width=5.5cm}
\caption{$\frac{1}{\sqrt{t}}L_{\alpha,t,n}^r f(x)$ on the boundary
of a uniform half sphere, with $t=0.5$.}\label{fig:bd:half:shpere}
\end{figure}

To provide a more rigorous error analysis we compute the mean square
errors for several values of $t$ and sample size $n$. The results
are shown in Table~(\ref{table:error}). We see that the errors are
relatively small compared to the values of the gradient and
generally decrease with more data.

\begin{table}[h]
\centering \caption{Mean square errors between the analytical normal
gradient $-\d_\n f(x,y,z)=-x$ and
$\frac{1}{\sqrt{t}}L_{\alpha,t,n}^r f(x,y,z)$ on the boundary of a
half sphere with $f(x,y,z)=xz$.}\label{table:error}
\begin{tabular}{|c||c|c|c|c|c|c|c|} \hline
${n \textrm{\textbackslash} t}$& 64/64 & 32/64 & 16/64 & 8/64 & 4/64 & 2/64 &1/64 \tstrut \bstrut\\
\hline\hline
500 &  0.0090  &  0.0059  &  0.0071  &  0.0136  &  0.0287 & 0.0500 & 0.0725  \tstrut \bstrut\\
1000 &  0.0089  &  0.0048  &  0.0033  &  0.0061  &  0.0121  & 0.0294 & 0.0627 \tstrut \bstrut\\
2000  &  0.0113  &  0.0076  &  0.0073  &  0.0068  &  0.0159  &  0.0356 &  0.0585 \tstrut \bstrut\\
4000 &  0.0083  &  0.0044   & 0.0036  &  0.0044  &  0.0072  & 0.0189 &   0.0516 \tstrut \bstrut\\
\hline
\end{tabular}
\end{table}

\subsection{Comparison to Numerical PDE's}
\label{subsec:numericalPDE} From previous analysis we can see that,
for graph Laplacians, the ``missing'' edges going out of the
manifold boundary on one hand can be seen as being reflected back
into $\Om$, which is particularly intuitive in symmetric $k$NN
graphs, see e.g., \cite{maier2009}, on the other hand, it can be
seen that function values on edges going out of $\Om$ are constant
along the normal direction. The latter view is commonly used in
schemes of numerical PDE's in finite difference methods for the
Neumann boundary condition, see e.g., \cite{allaire}.

\begin{figure}[h]
\centering \epsfig{file=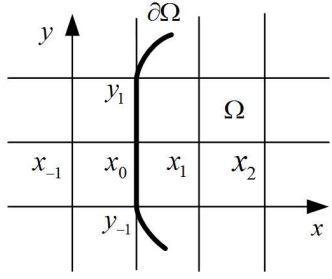, height=4.5cm, width=5.5cm}
\caption{Regular grid in $\R^2$.}\label{fig:grid}
\end{figure}

We use an example in $\R^2$ to show how the Neumann boundary
condition for a Laplace operator on a regular grid is implemented in
finite difference method, which we hope can shed light on the graph
Laplacian on random points. The Laplace operator in $\R^2$ is
$\Delta f=- \d_x^2 f- \d_y^2 f$, and the regular grid near the
boundary is shown in Figure~(\ref{fig:grid}). Since we can separate
the Laplacian into partial derivatives of different dimensions, the
discrete Laplace matrix $L$ on the regular grid with the Neumann
boundary condition near $x_0$ along $x$ direction can be shown to be
\begin{displaymath}
L=
\begin{array}{rl}
x_0\\
x_1\\
x_2
\end{array}
\Bigg [
\begin{array}{rrrrrr}
1  & -1 &  0 &  0 & \ 0 & \cdots \\
-1 &  2 & -1 &  0 & \ 0 & \cdots \\
0  & -1 &  2 & -1 & \ 0 & \cdots \\
\end{array}
\Bigg ]
\end{displaymath}
where we only connects points that are next to each other. Along $y$
direction the Laplace matrix elements near $x_0$ are $[\cdots \ -1\
\ 2 \ -1 \ \cdots]$. Let the distance between data points be $h$.
Consider point $x_0$ on the boundary, along $y$ direction, we have a
3-point stencil along $y$ axis, $y_{-1},y_0=x_0, y_1$, which is
enough to define $\d_y^2 f(x_0)$.
\begin{displaymath}
\lim_{h\to 0}\frac{1}{h^2}Lf(y_0)=-\lim_{h\to
0}\frac{f(y_{-1})-2f(y_0)+f(y_1)}{h^2} = -\d_y^2 f(y_0)
\end{displaymath}
This is also true for points that are in the interior $\Omega$, e.g.
$x_1$, $x_2$, etc. However, for $x_0$ on the boundary along $x$
direction (normal direction at $x_0$) we only have two points
\begin{displaymath}
\lim_{h\to 0}\frac{1}{h^2}Lf(x_0)=-\lim_{h\to
0}\frac{f(x_1)-f(x_0)}{h^2} \to -\lim_{h\to 0}\frac{\d_x f(x_0)}{h}
\end{displaymath}
This shows that $Lf(x)$ ``converge'' to $\frac{1}{h}\d_\n f(x)$ for
$x$ on the boundary while to $\Delta f(x)$ for $x$ inside of the
domain, with a different scaling behavior. This is almost the same
as what happens to the graph Laplacian on random samples. Notice in
numerical PDE's we also have that if $\d_x f(x_0)\ne 0$, as $h\to
0$, $\frac{\d_\n f(x_0)}{h}\to \infty$. In fact, if we construct the
graph Laplacian matrix $D-W$ by setting $w_{ij}=1$ if two points are
next to each other and $w_{ij}=0$ otherwise, on two dimensional grid
as shown in Figure~(\ref{fig:grid}), the graph Laplacian matrix is
the same as the Laplace matrix with the {\em Neumann boundary
condition} in numerical PDE's.

In order to let $Lf(x)$ converges to $\Delta f(x)$ for all $x$ on
domain $\Om$ with a single normalization term, we can add a
`fictitious' point $x_{-1}$ along the normal direction, and let
$f(x_{-1})=f(x_0)$. Then as $h\to 0$ we have
\begin{displaymath}
\frac{1}{h^2}Lf(x_0)=-\frac{f(x_1)-f(x_0)}{h^2}=-\frac{f(x_{-1})-2f(x_0)+f(x_1)}{h^2}
\to -\d_x^2 f(x_0)
\end{displaymath}
Together with $y$ direction, we have $\forall x\in \Om$, $\lim_{h\to
0} Lf(x)= -\d_x^2 f(x) -\d_y^2 f(x) =\Delta f(x)$. Condition
$f(x_{-1})=f(x_0)$ then becomes
\begin{displaymath}
\lim_{h\to 0} \frac{f(x_{-1})-f(x_0)}{h}=\d_x f(x_0)=0
\end{displaymath}
which is the Neumann boundary condition. This method is used to
implement the Neumann boundary condition in finite difference
methods for PDE's, see e.g., \cite[Chapter 2]{allaire}.

The graph Laplacian can be seen as an implementation of the Neumann
Laplacian on random points, which generalizes regular grids to
random graphs based on random samples. This also means by
construction, the graph Laplacian is a {\em Neumann Laplacian},
which is a built-in feature of the graph Laplacian.

\section{Discussions and Implications}\label{sec:implication}
\subsection{Neumann Boundary Condition} \label{sec:NeumannBC}
Our analysis of the boundary behavior suggests that the
eigenfunctions of both $L_\alpha^r$ and $L_\alpha^u$ as well as
solutions of certain regularization problems should satisfy the
Neumann boundary condition. However, this is not true for the
symmetric normalized Laplacian $L_\alpha^s$.

{\bf Unnormalized and Random Walk Normalized Graph Laplacians:}
These two versions of graph Laplacians only differ in the density
weight outside of the normal gradient, so we only need to find the
boundary condition for one of them. Let $\phi_i(x)$ and $\lambda_i$
be the $i^{th}$ right eigenfunctions of $L_\alpha^r$, then for any
positive integer $i$ and $x\in \d\Omega$, the following Neumann
boundary condition holds.
\begin{equation}
\d_\n \phi_i(x)=0
\end{equation}
This is also true for the eigenfunctions of $L_\alpha^u$, the limit
of the unnormalized graph Laplacian, which can be seen as follows.
All the eigenfunctions should satisfy
\begin{displaymath}
\lim_{t\to 0}\frac{1}{t}L_{\alpha,t}^r \phi_i(x)=L_\alpha^r
\phi_i(x)=\lambda_i \phi_i(x)
\end{displaymath}
On the boundary
\begin{displaymath}
\lim_{t\to 0}\frac{1}{t}L_{\alpha,t}^r \phi_i(x) = -\lim_{t\to 0}
\frac{1}{\sqrt{t}}\d_\n \phi_i(x)
\end{displaymath}
Since $\lambda_i \phi_i(x)<\infty$ for all $x\in \Om$, if
$\phi_i(x)$ does not satisfy the Neumann boundary condition, then
$\lim_{t\to 0}\frac{1}{t}L_{\alpha,t}^r \phi_i(x)\to \infty$ on the
boundary, therefore such $\phi_i(x)$ can not be the eigenfunctions
of $L_\alpha^r$. This implies that all the eigenfunctions of
$L_\alpha^r$ should satisfy the Neumann boundary condition, i.e.,
$\forall i, \d_\n \phi_i(x)=0$ for $x\in \d\Omega$. Similarly, this
is true for the unnormalized graph Laplacian with a bounded density.

\begin{figure}[h]
\vskip 0.2in
\begin{center}
\subfigure[$\phi_2(x)$ for uniform over unit interval.]{
\includegraphics[width=0.33\columnwidth]{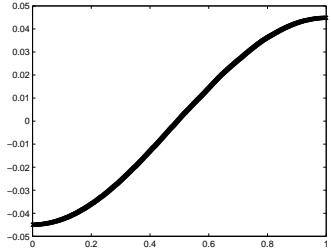}
}\hspace{45pt} % \qquad \qquad
\subfigure[$\phi_2(x)$ for mixture of two Gaussians.]{
\includegraphics[width=0.33\columnwidth]{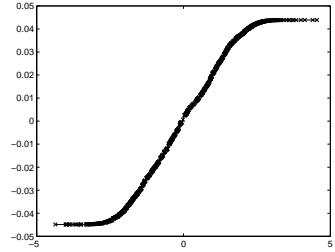}
} \subfigure[$\phi_3(x)$ for uniform over unit interval.]{
\includegraphics[width=0.33\columnwidth]{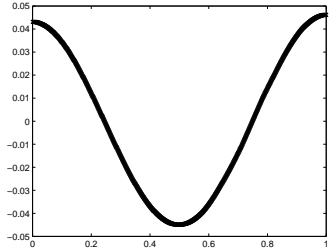}
}\hspace{45pt} % \qquad \qquad
\subfigure[$\phi_3(x)$ for mixture of two Gaussians.]{
\includegraphics[width=0.33\columnwidth]{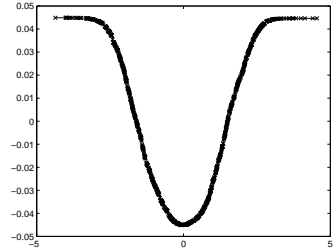}
} \caption{The eigenfunctions of the graph Laplacian for a uniform
density and a mixture of two Gaussians centered at $\pm1.5$ with
unit variance in $\R^1$.} \label{fig:eigenfunctions:R1}
\end{center}
\end{figure}

We numerically compute the second and third eigenfunctions of
$L_{\alpha,t,n}^r$ with $\alpha=1/2$ in
Figure~(\ref{fig:eigenfunctions:R1})\footnote{The Neumann boundary
condition also holds for other $\alpha$.}. The left panel shows the
eigenfunctions over interval $[0,1]$ with a uniform density, while
the right panel is for a mixture of two Gaussians. As the numerical
results suggest, the second and third eigenfunctions of the graph
Laplacian satisfy the Neumann boundary condition, and this is also
true for other eigenfunctions. In fact for a uniform over $[0,1]$,
the Neumann eigenfunctions for the Laplacian are $\cos(k\pi x)$
where $k=0,1,2,\cdots$. The second and third eigenfunctions
correspond to $\cos(\pi x)$ and $\cos(2\pi x)$, up to a change of
sign, which is consistent with the numerical results in the left
panel of Figure~(\ref{fig:eigenfunctions:R1}).

{\bf Symmetric Normalized Graph Laplacian:} For the symmetric
normalized graph Laplacian $L_{\alpha,t,n}^s$, the Neumann boundary
condition does not hold for its eigenfunctions in the limit. This
can be shown by a one to one correspondence between the
eigenfunctions of $L_{\alpha,t,n}^s$ and $L_{\alpha,t,n}^s$. Let the
eigenvectors for $L_{\alpha,t,n}^s$ be $\psi_i$, and the right
eigenvectors for $L_{\alpha,t,n}^r$ be $\phi_i$, then
\begin{displaymath}
\psi_i(X_i)=d^{1/2}(X_i)\phi_i(X_i)
\end{displaymath}
This is true for any sample size and any parameter $t$. Therefore,
in the limit
\begin{displaymath}
\d_\n \psi_i(x)=\d_\n [d^{1/2}(x)\phi_i(x)]=\phi_i(x) \d_\n
d^{1/2}(x)
\end{displaymath}
Near the boundary along the normal direction, degree function $d(x)$
decreases as a result of the asymmetric interval for $\int_{\Om}
K_t(x,y)p(y) dy$, so $\psi_i(x)$ tends to be ``bent'' towards zero
near the boundary. Since how the graph is constructed will determine
what the degree function will be, the boundary behavior also depends
on what graph is used. We test two graphs, $\epsilon$NN graph and
symmetric $k$NN graph, which are studied by \cite{maier2009} for
clustering. In the left panel of Figure~(\ref{fig:eigenfunctionLs}),
$d(x)$ is scaled and shifted to fit the plot and an $\epsilon$NN
graph is used. For $x$ near the boundary, the degree function
decreases as a result of having less points in the fixed radius
neighborhood. Therefore, $\psi_2(x)$ is ``bent'' towards zero.

Notice that for the symmetric $k$NN graph, in the right panel of
Figure~(\ref{fig:eigenfunctionLs}), the eigenfunctions can have
``bumps'' near the boundary. This is the result that for symmetric
$k$NN graphs, the edges going out of the boundaries are
``reflected'' back. For example consider $k=10$ in a symmetric $k$NN
graph, with the distance between points set as $0.01$, for point
$x=0$, its nearest neighbors are $x=0.01,0.02,\cdots,0.1$. However,
for point $x=0.1$, $x=0$ is not in its $k$ nearest neighbors. This
means the constructed graph will not be symmetric. If we add
$e_{ji}$ for every asymmetric edge $e_{ij}$, then although the graph
is symmetric now, $d(x)$ for point $x=0.1$ will be much larger than
other points. As $t$ decreases, this ``bumps'' will shift to the
boundary.
\begin{figure}[h]
\vskip 0.2in
\begin{center}
\subfigure[On an $\epsilon$NN graph.]{
\includegraphics[width=0.4\columnwidth]{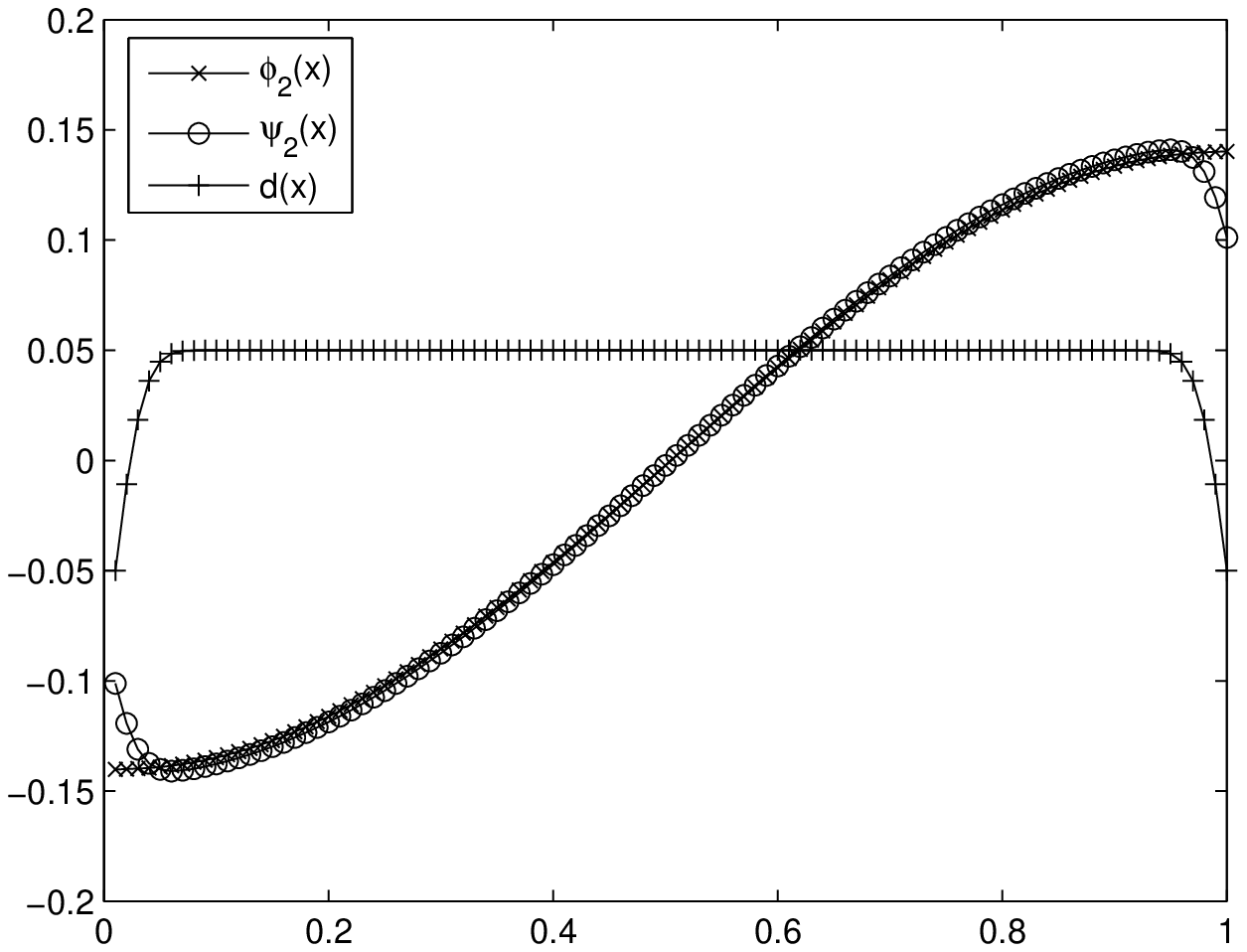}
}\hspace{30pt} \subfigure[On a symmetric $k$NN graph.]{
\includegraphics[width=0.4\columnwidth]{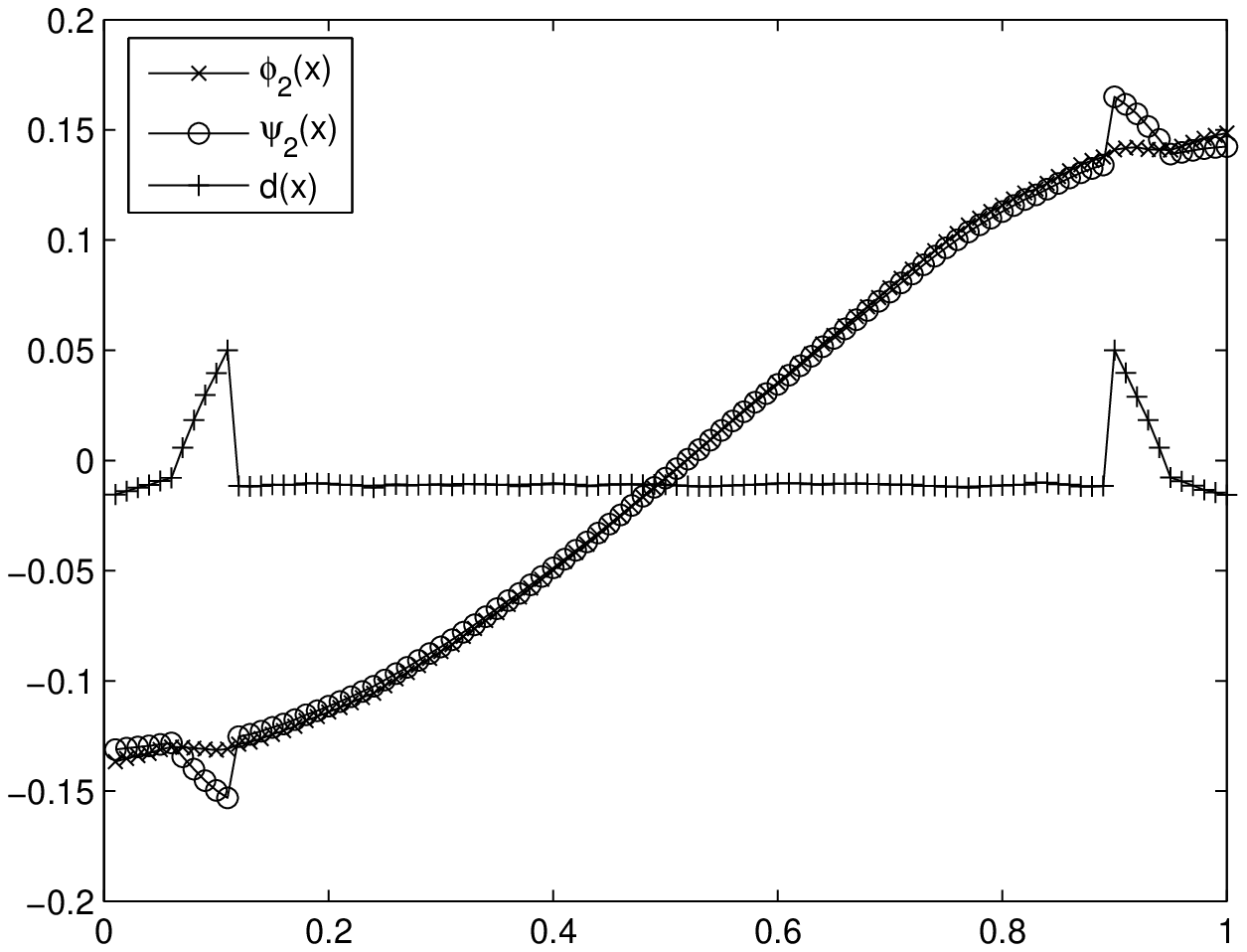}}
\caption{$\psi_2(x)$, $\phi_2(x)$ and $d(x)$ for uniform over
$[0,1]$.} \label{fig:eigenfunctionLs}
\end{center}
\end{figure}

\subsection{Limit of Graph Laplacian Regularizer}
The following graph Laplacian regularizer is a popular penalty term
in many semi-supervised learning algorithms when $\alpha=0$.
\begin{equation}\label{equ:quadraticform}
f^TL_{\alpha,t,n}^uf=\frac{1}{2}\sum_{i,j=1}^n
\frac{K_t(X_i,X_j)}{[d_{\alpha,t}(X_i)d_{\alpha,t}(X_j)]^\alpha}(f(X_i)-f(X_j))^2
\end{equation}
This limit is studied in \cite[Chapter 2]{hein} without considering
the boundary, and is also studied on $\R^N$ by \cite{bousquet},
which has no boundary and is not a low dimensional manifold either.
From Theorem (\ref{thm:limit:L}), we see that the graph Laplacian
has a limit of different scaling behavior on the boundary points
compared to the interior points. This leads to the question of the
limit of the graph Laplacian regularizer $f^TL_{\alpha,t,n}^uf$ when
it is defined on a compact submanifold with a smooth boundary. Based
on the quadratic form, we use a similar method as the proof of
Theorem (\ref{thm:limit:L}) to obtain the next theorem.

\begin{theorem}\label{thm:square}
For a fixed function $f(x)\in C^1(\Om)$, and let $p(x)\in
C^{\infty}(\Om)$, $0<a\le p(x)\le b <\infty$,  and the intrinsic
dimension of $\Om$ be $d$, then as $n\to \infty$, $t\to 0$ and
$n^2t^{d+2}\to \infty$,
\begin{equation}
\lim_{n\to \infty} \frac{1}{n^2 t}f^TL_{\alpha,t,n}^uf= C\int_{\Om}
\|\nabla f(x)\|^2 [p(x)]^{2-2\alpha} dx, \qquad \textrm{in
probability}
\end{equation}
where $C=\frac{1}{4}\pi^{d(1/2-\alpha)}$.
\end{theorem}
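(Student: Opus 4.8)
The plan is to follow the template of the proof of Theorem~\ref{thm:limit:L}: first identify the limit of the expectation of the rescaled quadratic form, then establish concentration around that expectation. Writing
\begin{equation}
\frac{1}{n^2 t}f^T L_{\alpha,t,n}^u f=\frac{1}{2n^2 t}\sum_{i,j=1}^n \frac{K_t(X_i,X_j)}{[d_{t,n}(X_i)d_{t,n}(X_j)]^\alpha}(f(X_i)-f(X_j))^2,
\end{equation}
I would first replace the empirical degrees by their deterministic limits via $d_{t,n}(x)\stackrel{\rm a.s.}{\to}C_1 p(x)$, and then study the resulting $V$-statistic with the symmetric, $t$-dependent kernel
\begin{equation}
g_t(x,y)=\frac{1}{2t}\frac{K_t(x,y)}{[C_1 p(x)\,C_1 p(y)]^\alpha}(f(x)-f(y))^2,
\end{equation}
so that the quantity of interest is essentially $\frac{1}{n^2}\sum_{i,j}g_t(X_i,X_j)$ (the diagonal terms vanish).

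For the expectation I would compute $\E[g_t(X,Y)]$ for i.i.d.\ $X,Y$ by the same localization as in Steps~1--2 of Theorem~\ref{thm:limit:L}: the Gaussian kernel confines $y$ to a ball of radius $O(\sqrt t)$ about $x$, where the hypothesis $f\in C^1$ already gives $(f(x)-f(y))^2=(\nabla f(x)^T(y-x))^2+o(\|y-x\|^2)$ (this is why $C^1$ suffices, whereas $C^3$ was needed before). After the change of variables $y=x+\sqrt t\,u$ the powers of $t$ cancel exactly against the $\tfrac1t$ prefactor, and the surviving Gaussian integral is $\int_{\R^d}K(\|u\|^2)(\nabla f(x)^T u)^2\,du=\tfrac12\pi^{d/2}\|\nabla f(x)\|^2$, the odd moments vanishing. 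Using $C_1=\pi^{d/2}$ this gives the pointwise limit $\E_Y[g_t(x,Y)]\to C\,[p(x)]^{1-2\alpha}\|\nabla f(x)\|^2$ with $C=\tfrac14\pi^{d(1/2-\alpha)}$; integrating once more against $p(x)\,dx$ yields $\E[g_t]\to C\int_{\Om}\|\nabla f(x)\|^2[p(x)]^{2-2\alpha}dx$, and the combinatorial factor $n(n-1)/n^2\to1$ accounts for the off-diagonal pairs. It is worth stressing that the boundary shell $\Omega_1$ has volume $O(\sqrt t)$ while the integrand stays bounded there, so—unlike in Theorem~\ref{thm:limit:L}—the boundary contributes only $O(\sqrt t)\to0$ and no normal-derivative term survives in the limit of the regularizer.

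For the concentration I would use the Hoeffding decomposition of the $V$-statistic, which gives $\mathrm{Var}\big(\tfrac{1}{n^2}\sum g_t(X_i,X_j)\big)=O(\zeta_1/n)+O(\zeta_2/n^2)$, where $\zeta_1=\mathrm{Var}(\E_Y[g_t(\cdot,Y)])$ and $\zeta_2\le\E[g_t^2]$. The projection $\E_Y[g_t(x,Y)]$ is uniformly bounded (it converges to $C[p]^{1-2\alpha}\|\nabla f\|^2$ on the compact $\Om$), so $\zeta_1=O(1)$ and the first term dies as $n\to\infty$. The second moment is the delicate one: repeating the localization with $K_t^2(x,y)=t^{-d}e^{-2\|x-y\|^2/t}$ produces one extra factor $t^{-d/2}$, so $\E[g_t^2]=O(t^{-d/2})$ and the second term is $O\!\big(1/(n^2 t^{d/2})\big)$. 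Both therefore vanish under the scaling hypothesis, and the stated condition $n^2t^{d+2}\to\infty$ is a (conservative) sufficient condition; convergence in probability then follows from Chebyshev's inequality.

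The main obstacle is the coupling introduced by the degree normalization $[d_{t,n}(X_i)d_{t,n}(X_j)]^{-\alpha}$, which makes the summands neither independent nor exact functions of $(X_i,X_j)$ alone—the very difficulty that forced McDiarmid's inequality in Theorem~\ref{thm:limit:L}. I would handle it by showing the empirical degrees concentrate around their continuous counterpart with fluctuation of order $t^{-d/4}/\sqrt n$, so that replacing $[d_{t,n}]^{-\alpha}$ by its deterministic limit perturbs the quadratic form by a relative amount that is negligible under the same scaling; alternatively one can bound the bounded-difference constants and invoke McDiarmid directly. Quantifying this replacement error uniformly, and verifying that the $t^{-d/2}$ blow-up of $\E[g_t^2]$ is genuinely dominated by $n^2$, are precisely the two places where the assumption $n^2t^{d+2}\to\infty$ is used.
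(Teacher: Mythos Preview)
Your proposal is correct and its expectation computation follows the paper's own argument essentially line for line: split $\Om$ into interior $\Omega_0$ and boundary shell $\Omega_1$, localize the Gaussian integral, observe that the leading term of $(f(x)-f(y))^2$ is already $O(t)$ so no boundary blow-up occurs, and let the $O(\sqrt t)$-volume shell vanish. The only place you genuinely diverge is in the concentration step. The paper does not decouple the degrees at all; it applies McDiarmid's inequality directly to the full statistic $\frac{1}{2n^2t}\sum_{i,j}K_t(X_i,X_j)[d_{t,n}(X_i)d_{t,n}(X_j)]^{-\alpha}(f(X_i)-f(X_j))^2$, bounds the single-coordinate change by $\tfrac{1}{2n^2 t^{d/2+1}}\,a^{-2\alpha}(2M)^2$, and reads off the exponent $n^2t^{d+2}\epsilon^2 a^{4\alpha}/(2M^4)$, which is precisely where the scaling hypothesis $n^2t^{d+2}\to\infty$ comes from. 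Your primary route---freeze the degrees at $C_1 p(\cdot)$, then use the Hoeffding decomposition of the resulting $V$-statistic together with Chebyshev---is a legitimate alternative and, as you note, actually yields the sharper sufficient condition $n^2t^{d/2}\to\infty$; the price is the extra step of controlling the replacement error $\sup_x|d_{t,n}(x)-d_t(x)|$ uniformly in $x$, which the paper avoids entirely by absorbing the degree dependence into the bounded-difference constant. Your fallback (McDiarmid) is exactly the paper's proof, so either branch closes the argument.
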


\begin{proof}
Following the proof of Theorem (\ref{thm:limit:L}), let $\Omega_1$
be a thin lay of ``shell'' of width $O(\sqrt{t})$, and
$\Omega_0=\Om/\Omega_1$. For a fixed $t$, consider the quadratic
form (\ref{equ:quadraticform}), the limit as $n\to \infty$ is
\begin{equation}
\frac{1}{2}\iint_{\Om}
\frac{K_t(x,y)}{d_{t}^\alpha(x)d_{t}^\alpha(y)}(f(x)-f(y))^2
p(y)p(x)dydx
\end{equation}
Then by the approximation on manifolds (\ref{equ:basicApprox}) and
(\ref{equ:detailedApprox}), for a fixed $x\in \Om$,
\begin{equation}
\begin{array}{rl}
& \frac{1}{2t^{d/2}}p(x)d_{t}^{-\alpha}(x)\int_{\Om}
K(\frac{\|x-y\|^2}{t})d_{t}^{-\alpha}(y)(f(x)-f(y))^2 p(y)dy\\
\\
= & \frac{1}{2t^{d/2}}p(x)d_{t}^{-\alpha}(x)\int_{\T(x)}
K(\frac{\|u\|^2}{t})d_{t}^{-\alpha}(x)(u^T \nabla f(x))^2 p(x)du+O(t^{3/2})\\
\\
= & \frac{1}{2t^{d/2}}p^2(x)d_{t}^{-2\alpha}(x) \|\nabla
f(x)\|^2\int_{\T(x)}
K(\frac{\|u\|^2}{t}) u_1^2 du+O(t^{3/2})\\
\\
= & \frac{t}{2}p^2(x)d_{t}^{-2\alpha}(x) \|\nabla
f(x)\|^2\int_{\T(x)} K(\|u\|^2) u_1^2 du
+O(t^{3/2})\\
\end{array}
\end{equation}
Notice that in this case, the highest order is controlled by
$(f(x)-f(y))^2$, which is of order $O(t)$, and $\T(x)$ is the
tangent space at $x$. Then for any point $x\in \Om$, the limit is
\begin{equation}
\frac{C_2(x)}{2C_1^{2\alpha}(x)}\|\nabla f(x)\|^2 [p(x)]^{2-2\alpha}
\end{equation}
where
\begin{equation}
\begin{array}{rl}
C_2(x)=&\int_{-\infty}^{+\infty}\cdots \int_{-\infty}^{+\infty} \int_{-z}^{+\infty} e^{-\|u\|^2}u_1^2 du\\
\\
C_1(x)=&\int_{-\infty}^{+\infty}\cdots \int_{-\infty}^{+\infty} \int_{-z}^{+\infty} e^{-\|u\|^2} du\\
\end{array}
\end{equation}
with $z$ as the distance between $x$ and the nearest boundary point
along the normal direction.

On $\Omega_0$, for a small $t$, we can replace $z$ with $\infty$,
then the integral on $\Omega_0$ is
\begin{equation}
\frac{C_2}{2C_1^{2\alpha}}\int_{\Omega_0} \|\nabla f(x)\|^2
[p(x)]^{2-2\alpha} dx
\end{equation}
where the coefficient becomes a constant independent of $x$.

On the shell $\Omega_1$, as $t\to 0$, the shell shrinks into a set
with measure zero. As long as the function inside the integral is
bounded, the integral on $\Omega_1$ will also be zero. For $x\in
\Omega_1$, we have $0\le z< +\infty$ and
\begin{equation}
\begin{array}{rcl}
\frac{1}{8}\pi^{d/2}\le & C_2(x)=\frac{1}{4}\pi^{d/2}(1-\frac{2z}{\sqrt{\pi}e^{z^2}}+\textrm{erf}(z))& \le \pi^{d/2}\\
\\
\frac{1}{2}\pi^{d/2} \le & C_1(x)=\frac{1}{2}\pi^{d/2}(1+\textrm{erf}(z)) &\le \pi^{d/2} \\
\end{array}
\end{equation}
For $f\in C^1(\Om)$ and any $x\in \Om$, in any direction, $|\frac{\d
f(x)}{\d x_i}|<\infty$. The density $p(x)$ is also bounded,
therefore, the integral on $\Omega_1$ is zero as $t\to 0$. Overall,
as $t\to 0$, $C_2(x)\to C_2$, $C_1(x)\to C_1$, and $\Omega_0$
becomes $\Om$, therefore,
\begin{equation}
\lim_{t\to 0}\frac{1}{2t}\iint_{\Om}
\frac{K_t(x,y)}{d_{t}^\alpha(x)d_{t}^\alpha(y)}(f(x)-f(y))^2
p(y)p(x)dydx =\frac{C_2}{2C_1^{2\alpha}}\int_{\Om} \|\nabla f(x)\|^2
[p(x)]^{2-2\alpha} dx
\end{equation}
Next consider
\begin{equation}
\frac{1}{n^2 t}f^TL_{\alpha,t,n}^uf=\frac{1}{2n^2
t}\sum_{i,j}K_t(X_i,X_j)[d_{\alpha,t,n}(X_i)d_{\alpha,t,n}(X_j)]^{-\alpha}[f(X_i)-f(X_j)]^2
\end{equation}
Since all the $n^2$ terms in the sum are not i.i.d., we use the
McDiarmid's inequality. The maximum change of replacing one random
variable is
\begin{equation}
\frac{1}{2n^2 t^{d/2+1}} \cdot a^{-2\alpha} \cdot (2M)^2
\end{equation}
therefore,
\begin{equation}
P(|\frac{1}{n^2 t}f^TL_{\alpha,t,n}^uf- C\int_{\Om} \|\nabla
f(x)\|^2 [p(x)]^{2-2\alpha} dx|>\epsilon)\le 2
e^{-\frac{n^2t^{d+2}\epsilon^2 a^{4\alpha}}{2M^4}}
\end{equation}
We conclude the proof by plugging in $C_1=\pi^{d/2}$ and
$C_2=\frac{1}{2}\pi^{d/2}$.
\end{proof}\\
One important implication of this theorem is that, in order to use a
gradient penalty term w.r.t. to different density weights in the
form of $p^s(x)$ in the limit, we can use $f^TL_{\alpha,t,n}^uf$
with different $\alpha$ values. For instance, to obtain
$\int_{\Om}\|\nabla f(x)\|^2p(x)dx$ instead of having $p^2(x)$ as
the commonly used penalty term $f^TL^uf$, we can set $\alpha=1/2$.
This penalty then fits the fact that sample $X_i$ are drawn from
density $p(x)$.

We tested Theorem~(\ref{thm:square}) numerically on several
functions with a uniform density over $1$-dimensional interval
$[0,1]$. $1001$ equal-space points are generated, the value of $t$
is between $1$ and $10^{-7}$, and we compute the numerical graph
Laplacian approximation $\frac{1}{n^2t}f^TL_{\alpha,t,n}^u f$, and
the analytical value of $\int_0^1 |f^\prime(x)|^2 dx$. The ratios of
the two are reported in Table (\ref{table:square}) and the
coefficient plots as a function of $\log(t)$ are shown in
Figure~(\ref{fig:square}).

\begin{table}[t]
\centering \caption{Coefficient test for different functions. The
largest value for different $t$ of each function is
reported.}\label{table:square}
\begin{tabular}{|c||c|c|c|c|c|c|c|c|c|} \hline
$\alpha$\textbackslash $f(x)$& $\sqrt{x+1}$ & $x$ & $x^2+10x$ & $x^3$ & $e^{x}$ & $\sin(x)$ & $\cos(x)$ & $\cos(10x)$ & $\frac{1}{4}\pi^{(1/2-\alpha)}$ \tstrut \bstrut\\
\hline\hline
0   &  0.4424  &  0.4424  &  0.4424  &  0.4420  &  0.4423  & 0.4424  &  0.4423 & 0.4426 & 0.4431 \tstrut \bstrut\\
1/2 &  0.2497  &  0.2497  &  0.2497  &  0.2497  &  0.2497  & 0.2497  &  0.2497 & 0.2497 & 0.2500 \tstrut \bstrut\\
1   &  0.1411  &  0.1411  &  0.1411  &  0.1412  &  0.1411  & 0.1412  &  0.1410 & 0.1411 & 0.1410 \tstrut \bstrut\\
-1  &  1.3845  &  1.3846  &  1.3846  &  1.3819  &  1.3840  & 1.3827  &  1.3865 & 1.3859 & 1.3921 \tstrut \bstrut\\
\hline
\end{tabular}
\end{table}

In Figure~(\ref{fig:square}), eight different functions from
Table~(\ref{table:square}) are tested for the coefficient $C$ using
different bandwidth $t$. Each plot corresponds to one fixed $\alpha$
value. As suggested by the figure, in Table~(\ref{table:square}),
the maximum values from different $t$ are reported. From both the
figure and the table we can see that, the numerical results are
close to the theoretical coefficient $\frac{1}{4}\pi^{1/2-\alpha}$.
The figure also suggests a numerically stable patterns as $t$
decreases, until it is too small and loses numerical precision.
\begin{figure}[h]
\vskip 0.2in
\begin{center}
\subfigure[$\alpha=0$.]{
\includegraphics[width=0.40\columnwidth]{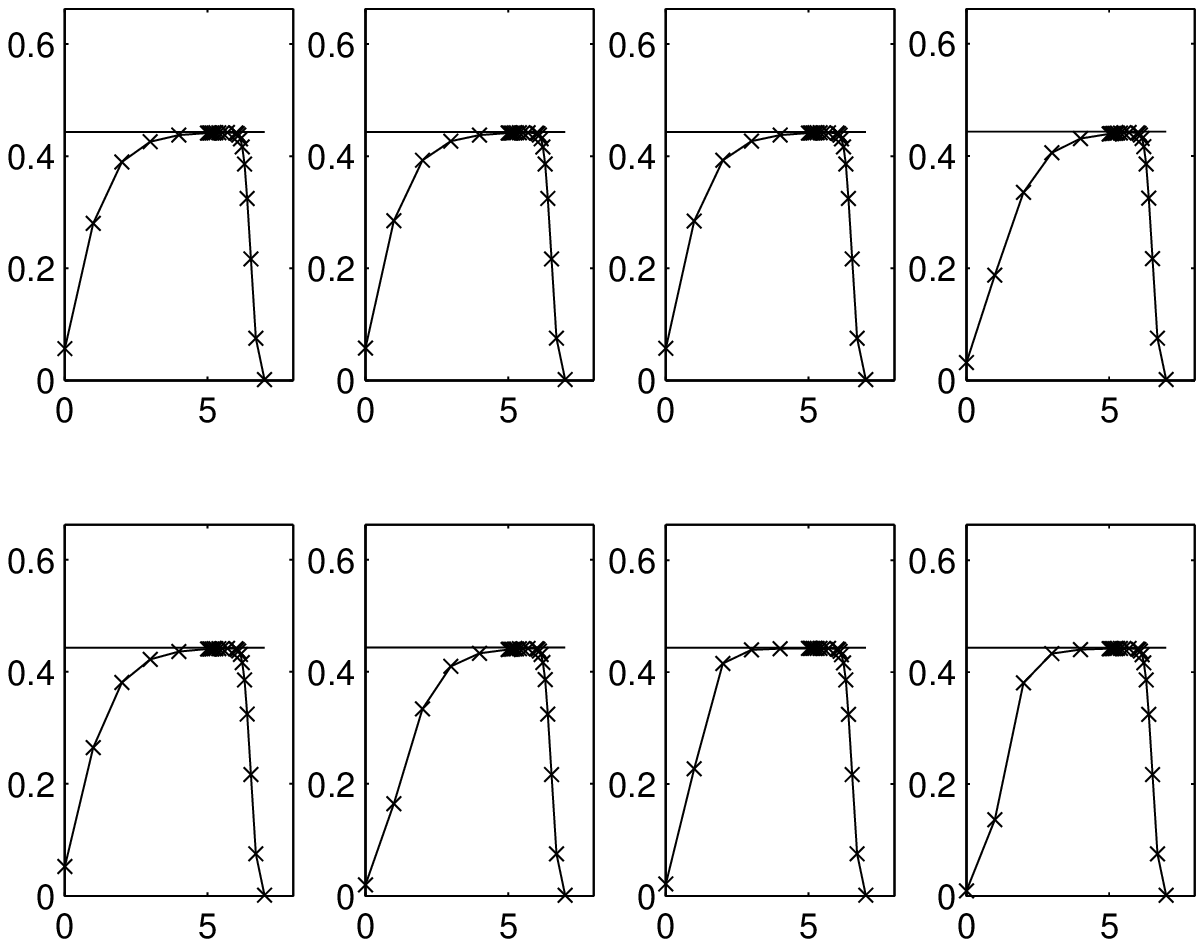}
}\subfigure[$\alpha=1/2$.]{
\includegraphics[width=0.40\columnwidth]{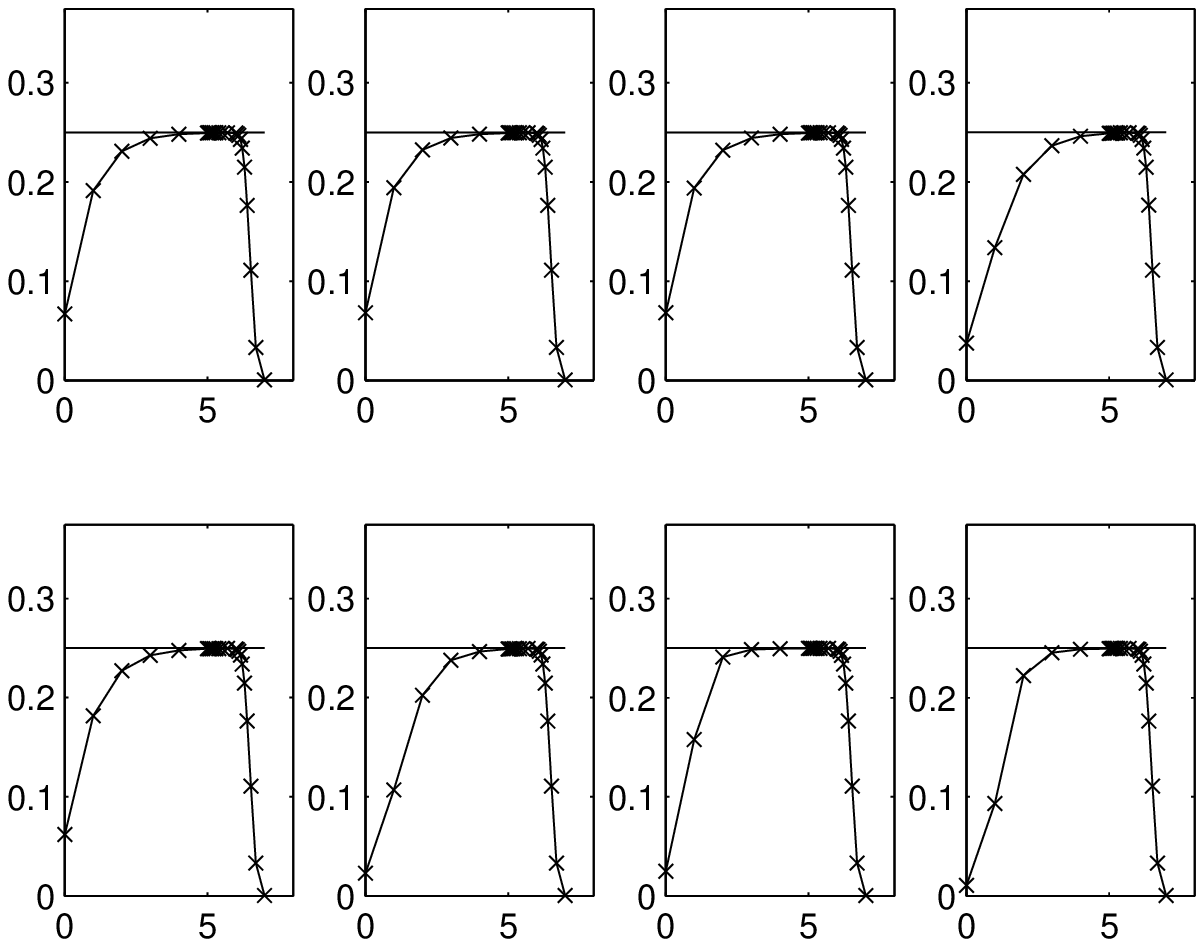}}
\subfigure[$\alpha=1$.]{
\includegraphics[width=0.40\columnwidth]{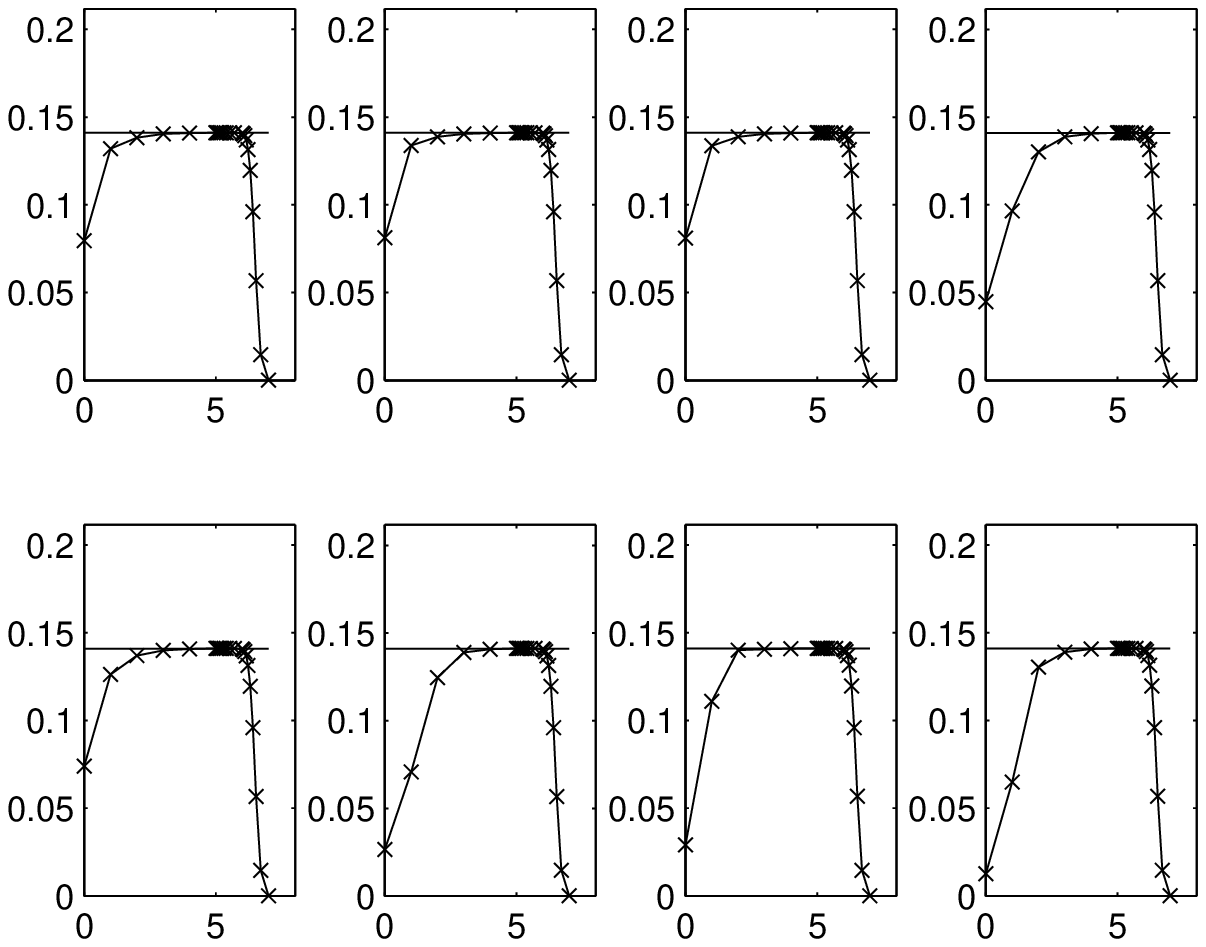}
}\subfigure[$\alpha=-1$.]{
\includegraphics[width=0.40\columnwidth]{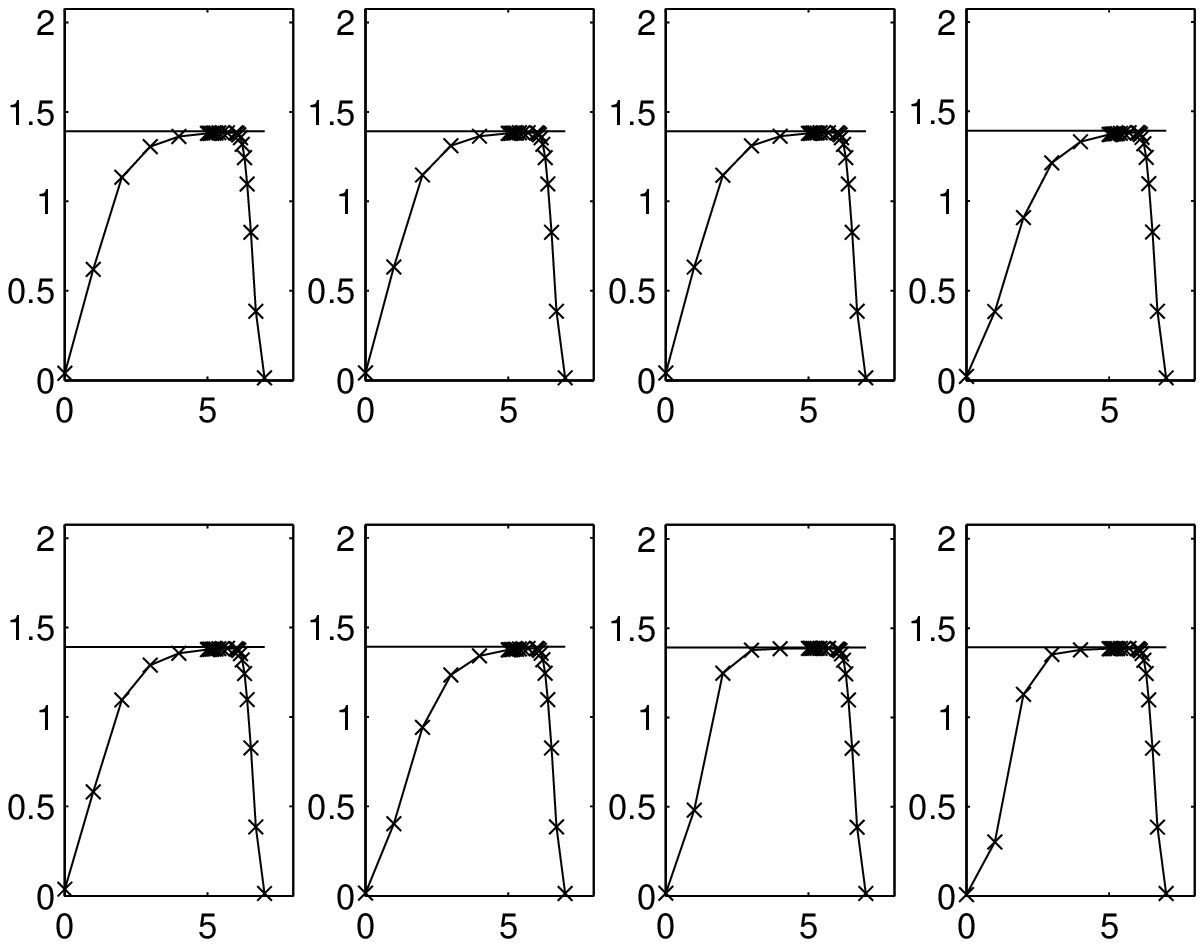}}
\caption{Coefficient $C$ as a function of $t$ for different
functions. The solid line is the theoretical result, and the $x$
axis corresponds to $-\log(t)$.}\label{fig:square}
\end{center}
\end{figure}

Notice that on finite samples, it is also possible to treat
$f^TL_{\alpha,t,n}^uf$ as an inner product as $\langle f,
L_{\alpha,t,n}^u f \rangle$. However, in the limit, as we see in
this paper, $L_{\alpha,t,n}^u f$ can degenerate to an unbounded
value on the boundary. Although this behavior only happens on a
small part having volume $O(t^{1/2})$ and the degenerating behavior
scales as $t^{1/2}$, they cannot cancel out each other since we can
not bring the limit $t\to 0$ across the integral when the sequence
of functions inside of the integral have an unbounded limit, i.e.,
it violates the condition of Dominated Convergence Theorem. When $f$
satisfy the Neumann boundary condition or $\Om$ has no boundary,
then it is safe to compute the limit as an inner product, which is
essentially the Green's first identity.

\subsection{Reproducing Kernels and Boundary Effects} In the short discussion below we
would like to illustrate the importance  of boundary effects in a
simple 1-dimensional setting. Consider the regularizer $f^TL^uf$,
whose limit for a fixed $f$ in $\R^N$ has the following
expression~\cite{bousquet}.
\begin{displaymath}
J(f)=\int_{\R^N} \|\nabla f(x)\|^2 p^2(x)dx
\end{displaymath}
In $\R^1$, the subspace orthogonal to the null space of $J(f)$ is a
reproducing  kernel  Hilbert space (RKHS) \cite{nadler2009}.

Consider its reproducing kernel function $K(x,y)$. Over the unit
interval $[0,1]$ with uniform probability density the kernel function can be found explicitly
by eigenfunction expansion of the Green's function
of the weighted Laplacian (Green's function in this case is the same
as kernel $K(x,y)$). Using the Neumann boundary condition, we find
that the  reproducing kernel (in the subspace orthogonal to the
null space) has the following expression:
\begin{equation}\label{equ:eigenexpansion}
K(x,y)=\sum_{k=1}^\infty \frac{1}{(k\pi)^2}\cos(k\pi x)\cos(k\pi y)
\end{equation}
On the other hand, in~\cite{nadler2009} the kernel without boundary conditions is shown
to be
\begin{equation}
K^\prime(x,y)=\frac{1}{4}-\frac{1}{2}|x-y|
\end{equation}
which is a different function.

In order to test our analysis, we notice that in finite sample case
the discrete Green's function of $L_{\alpha,t,n}^u$ is the same as
the reproducing kernel functions for space
$\{f:f^TL_{\alpha,t,n}^uf<\infty\}$, which is the pseudoinverse of
the matrix $L_{\alpha,t,n}^u$ \cite[Chapter 6]{berlinet}. In
Figure~(\ref{fig:kernel:R1}), we compute and plot the kernels
numerically to verify the above analysis. On one hand, we  use the
eigenfunction expansion (\ref{equ:eigenexpansion}) to find the
kernel. On the other hand, we  build the graph Laplacian matrix and
compute its pseudoinverse to obtain the approximate kernel function.
As shown in Figure~(\ref{fig:kernel:R1}), we can see that the kernel
obtained by eigenfunction expansion analytically is very close to
the one obtained from the pseudoinverse of the matrix
$L_{\alpha,t,n}^u$ (up to a constant scaling factor). This kernel is
quite different from $K^\prime$.  The difference  is due to the
global effect of the boundary behavior of the graph Laplacian and
provides additional evidence for the Neumann boundary condition in
eigenfunctions.

\begin{figure}[ht]
\vskip 0.2in
\begin{center}
\subfigure[Kernel $K$ by eigenfunction expansion]{
\includegraphics[width=0.3\columnwidth]{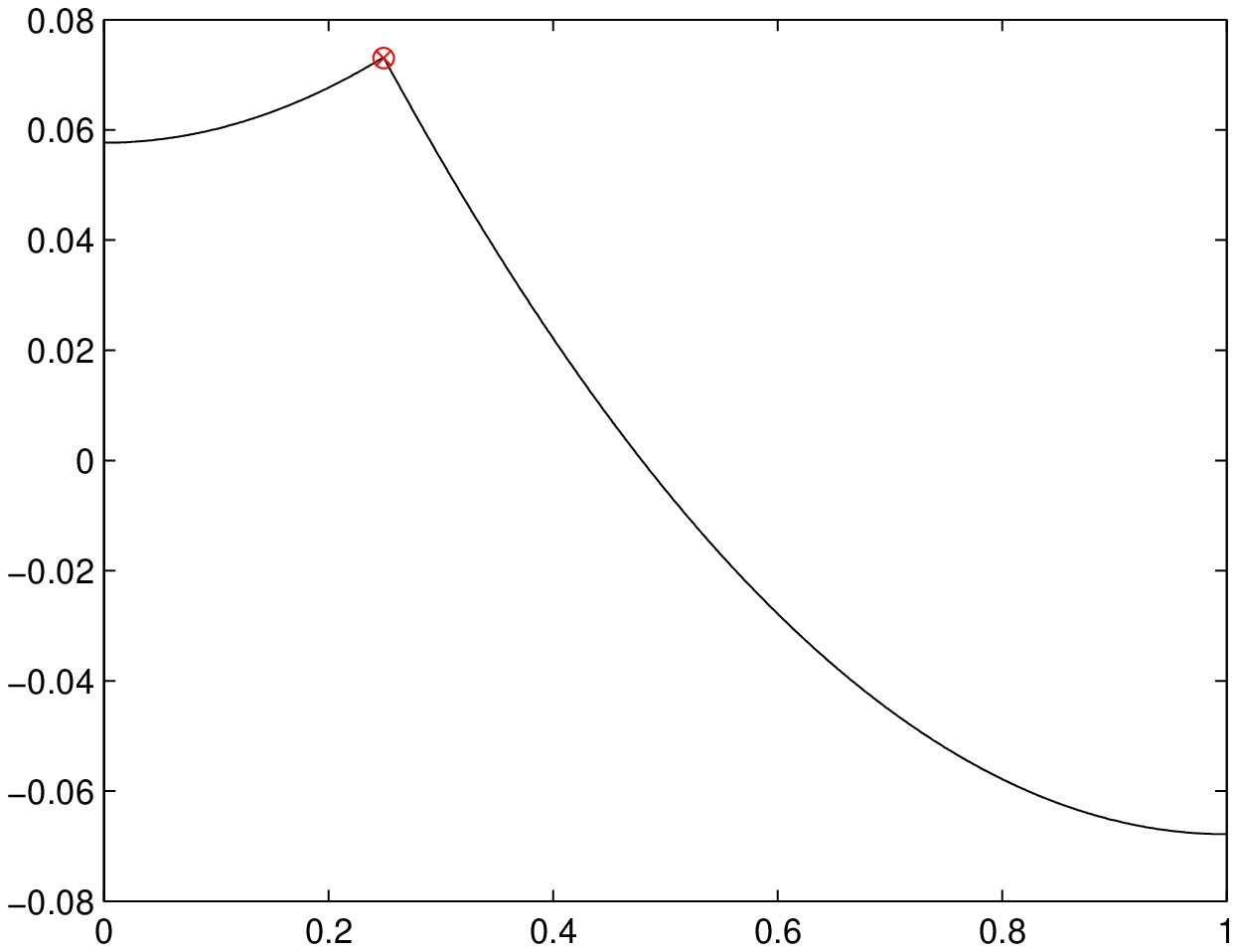}}
\subfigure[Kernel $K$ by pseudoinverse of the graph Laplacian]{
\includegraphics[width=0.3\columnwidth]{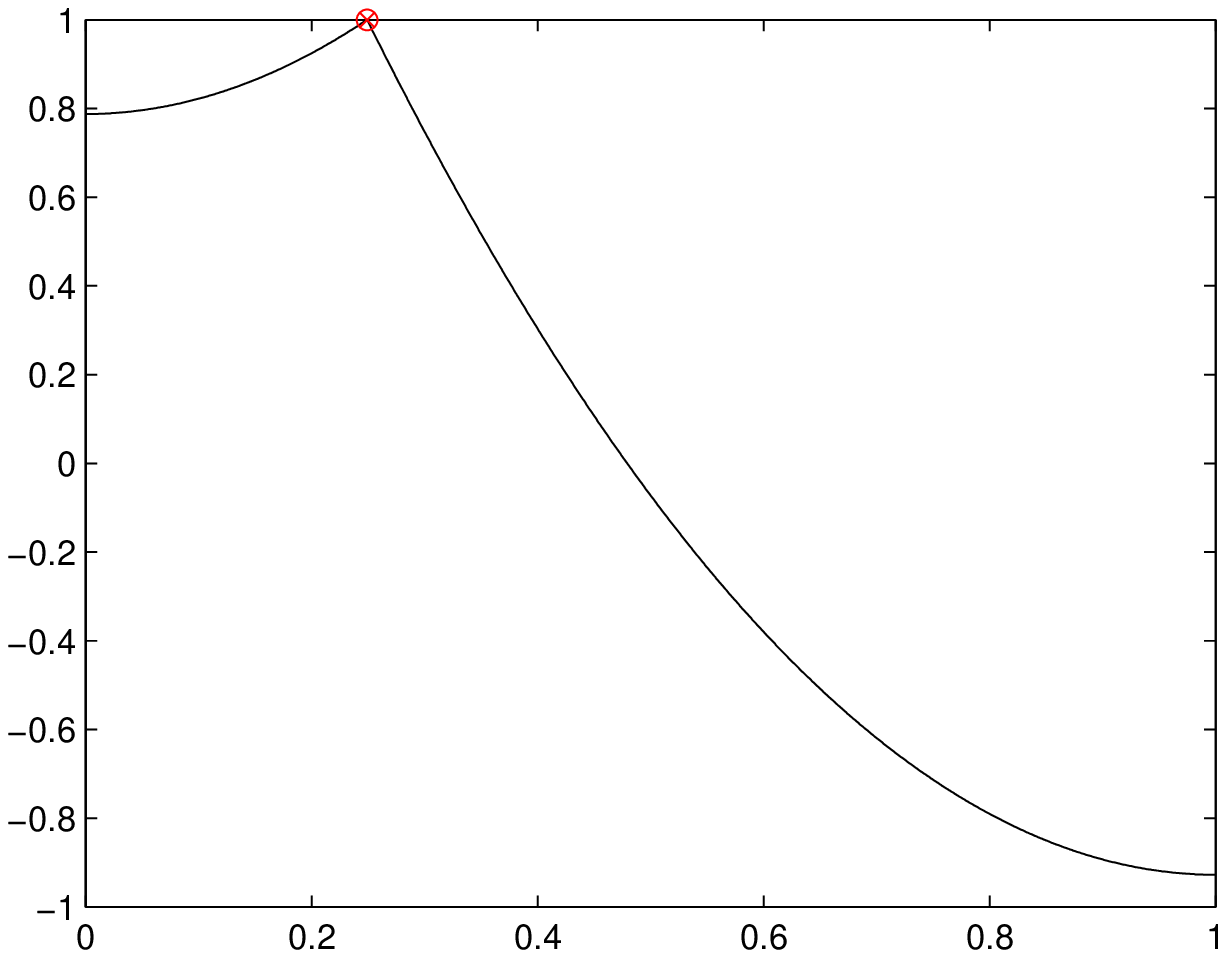}
}\subfigure[Kernel $K^\prime$]{
\includegraphics[width=0.3\columnwidth]{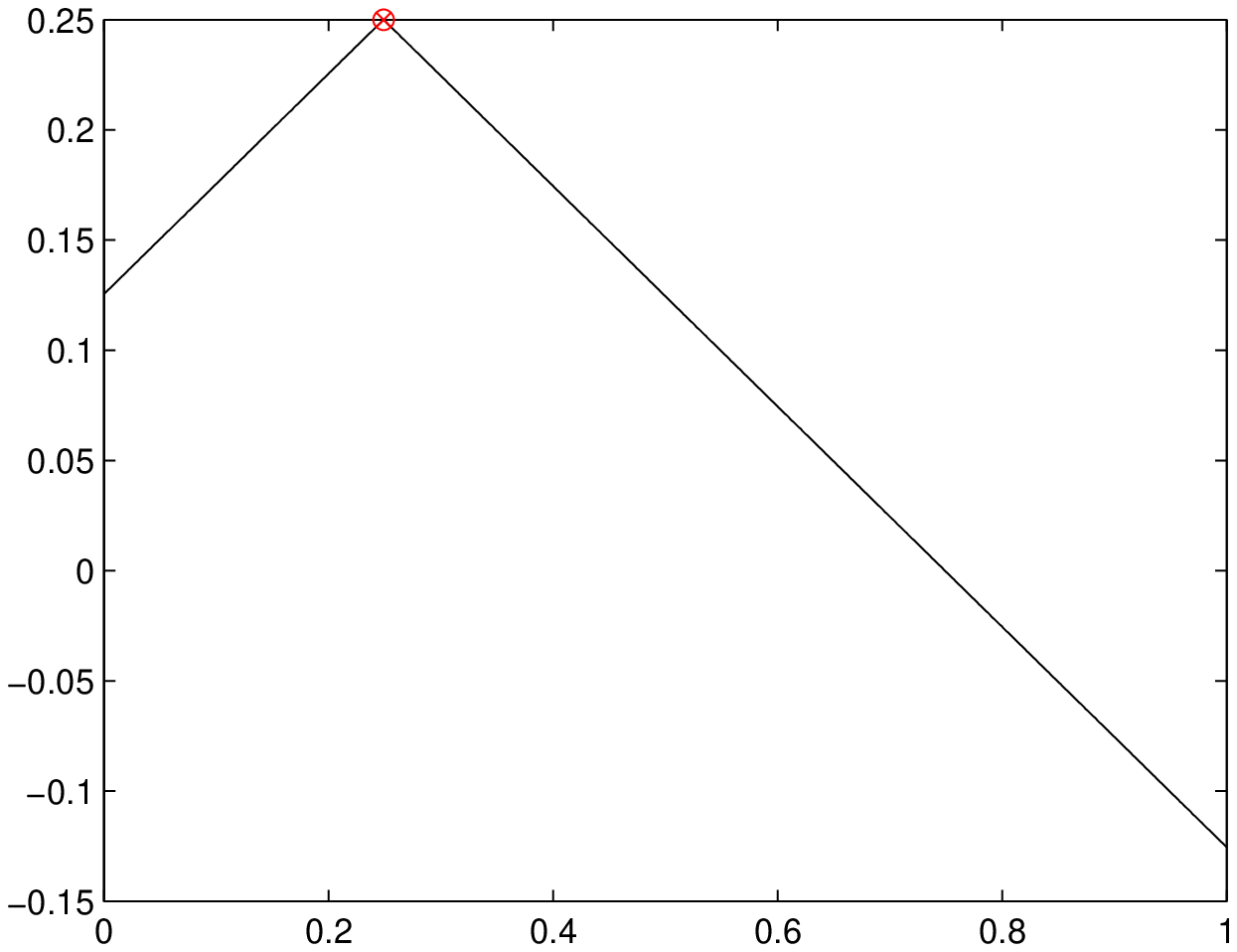}}
\caption{Kernels at $0.25$ over $[0,1]$ in subspace that is
orthogonal to the null of $L_{\alpha,t,n}^u$.} \label{fig:kernel:R1}
\end{center}
\vskip -0.2in
\end{figure}

\bibliographystyle{mlapa}
\bibliography{ref}

\end{document}